\documentclass[twoside,11pt]{article}

% Any additional packages needed should be included after jmlr2e.
% Note that jmlr2e.sty includes epsfig, amssymb, natbib and graphicx,
% and defines many common macros, such as 'proof' and 'example'.
%
% It also sets the bibliographystyle to plainnat; for more information on
% natbib citation styles, see the natbib documentation, a copy of which
% is archived at http://www.jmlr.org/format/natbib.pdf

\usepackage{jmlr2e}

% Definitions of handy macros can go here

% \usepackage{natbib}
% \bibliographystyle{abbrvnat}

\setcitestyle{authoryear,open={(},close={)}}
%%%%%%%%%%%%%%%%%%%%%%%%%%%%%%%%%%%%%%%%%%%%%%%%%%%%%%%%%%%%%%%%%%%%%%%%%%%%%%%%%%%%%%%%%%%%%%%%%%%%%%%%%%%%%%%%%%%%%%%%%%%%%%%%%%%%%%%%%%%%%%%%%%%%%% NAJI's COMMANDS AND PACKAGES%%%%%%%%%%%%%%%%%%%%%%%%%%%%%%%%%%%%%%%%%%%%%%%%%%%%%%%%%%%%%%%%%%%%%%%%%%%%%%%%%%%%%%%%%%%%%%%%%%%%%%%%%%%%%%%%%%%%%%%%%%%%%%%%%%%
\usepackage{amssymb}% http://ctan.org/pkg/amssymb
\usepackage{pifont}% http://ctan.org/pkg/pifont
\newcommand{\xmark}{\ding{55}}%
\usepackage{microtype}
\usepackage{graphicx}
\usepackage{subfigure}
\usepackage{booktabs} % for professional tables
\usepackage{enumerate}
\usepackage{amsmath}
\usepackage{amssymb}
\usepackage{amsfonts}
\usepackage{float}
\allowdisplaybreaks
\usepackage{thmtools, thm-restate}

\allowdisplaybreaks
\usepackage{thmtools, thm-restate}
\usepackage[utf8]{inputenc}
\usepackage[english]{babel}
\usepackage[inline]{enumitem}
% \allowdisplaybreaks
% \newtheorem{theorem}{Theorem}
% \newtheorem{lemma}{Lemma}
% \newtheorem{remark}{Remark}
%% commenting on different parts of article for different authors
\usepackage{ifthen}
\newboolean{showNotes}
\setboolean{showNotes}{True}
\newcommand\Naji[1]{\ifthenelse{\boolean{showNotes}}{\textbf{\textcolor{blue}{Naji: #1}}}{}}
\newcommand\Kun[1]{\ifthenelse{\boolean{showNotes}}{\textbf{\textcolor{blue}{Naji: #1}}}{}}
\newcommand\Peter[1]{\ifthenelse{\boolean{showNotes}}{\textbf{\textcolor{blue}{Naji: #1}}}{}}

%%% defining really wide hat taken
\usepackage{scalerel,stackengine}
\stackMath
\newcommand\reallywidehat[1]{%
\savestack{\tmpbox}{\stretchto{%
  \scaleto{%
    \scalerel*[\widthof{\ensuremath{#1}}]{\kern-.6pt\bigwedge\kern-.6pt}%
    {\rule[-\textheight/2]{1ex}{\textheight}}%WIDTH-LIMITED BIG WEDGE
  }{\textheight}% 
}{0.5ex}}%
\stackon[1pt]{#1}{\tmpbox}%
}
\parskip 1ex
% hyperref makes hyperlinks in the resulting PDF.
% If your build breaks (sometimes temporarily if a hyperlink spans a page)
% please comment out the following usepackage line and replace
% \usepackage{icml2019} with \usepackage[nohyperref]{icml2019} above.
\usepackage{hyperref}
\usepackage{cancel}
%%%%%% NEW COMMMANDS
%%%%%%%%%%%%%%%%%%%%

\usepackage[normalem]{ulem}

\usepackage{bm}

\newtheorem{assumption}{Assumption}

%%%%%%%%%%%%%%%%%%%%
%%%%%%%%%%%%%%%%%%

\usepackage[linesnumbered,ruled,vlined]{algorithm2e}
\SetKwRepeat{Do}{do}{while}

% Attempt to make hyperref and algorithmic work together better:

%%%%%%%%%%%%%%%%%%%%%%%%%%%%%%%%%%%%%%%%%%%%%%%%%%%%%%%%%%%%%%%%%%%%%%%%%%%%%%%%%%%%%%%%%%%%%%%%%%%%%%%%%%%%%%%%%%%%%%%%%%%%%%%%%%%%%%%%%%%%%%%%%%%%%%%%%%%%%%%%%%%%%%%%%%%%%%%%%%%%%%%%%%%%%%%%%%%%

% Heading arguments are {volume}{year}{pages}{date submitted}{date published}{paper id}{author-full-names}

% Short headings should be running head and authors last names

\ShortHeadings{Learning from Positive and Unlabeled Data by Identifying The Annotation Process}{Shajarisales and Spirtes and Zhang}
\firstpageno{1}
% \jmlrheading{1}{2000}{1-48}{4/00}{10/00}{meila00a}{Marina Meil\u{a} and Michael I. Jordan}

\begin{document}

\title{Learning from Positive and Unlabeled Data by Identifying The Annotation Process}

\author{\name Naji Shajarisales \email najis@cmu.edu \\
       \addr Carnegie Mellon University\\
       Pittsburgh, PA 15213, USA
       \AND
\name Peter Spirtes \email ps7z@andrew.cmu.edu \\
       \addr Carnegie Mellon University\\
       Pittsburgh, PA 15213, USA
       \AND
\name Kun Zhang \email kunz1@cmu.edu \\
       \addr Carnegie Mellon University\\
       Pittsburgh, PA 15213, USA
}
\maketitle

\begin{abstract}
In binary classification, Learning from Positive and Unlabeled data (LePU) is semi-supervised learning but with labeled elements from only one class. Most of the research on LePU relies on some form of independence between the selection process of annotated examples and the features of the annotated class, known as the Selected Completely At Random (SCAR) assumption.  Yet the annotation process is an important part of the data collection, and in many cases it naturally depends on certain features of the data (e.g., the intensity of an image and the size of the object to be detected in the image). Without any constraints on the model for the annotation process, classification results in  the LePU problem will be highly non-unique. So proper, flexible constraints are needed. In this work we incorporate more flexible and realistic models for the annotation process than SCAR, and more importantly, offer a solution for the challenging LePU problem. On the theory side, we establish the identifiability of the properties of the annotation process and the classification function, in light of the considered constraints on the data-generating process. We also propose an inference algorithm to learn the parameters of the model, with successful experimental results on both simulated and real data.  We also propose a novel real-world dataset for LePU, %% that is not synthetically created, and moreover we introduce 
as a benchmark dataset for future studies.
\end{abstract}

\section{Introduction}
\label{sec:intro}
\iffalse In statistical learning theory such \textit{constraints/biases} are introduced in terms of bounds on the complexity of the model class \citep{valiant1984theory}. Indeed the focus of this work is to study a relaxation of this general problem, namely the learning problem in the absence of examples from one class in a binary classification problem.\fi \iffalse Different theories of how one can formally capture such a learning process has been proposed, a famous one being the so called ``innate'' knowledge of Universal Grammar \citep{chomsky1980rules}. \fi 
Is an intelligent agent able to learn a concept of correct and
and incorrect when only exposed to correct instances? The answer
is surprisingly yes, at least when it comes to learning a
language. Known as “the paradox of the language acquisition”
\citep{jackendoff1997architecture}, human infants learn a language
almost solely based on being exposed to correct sentences
and words, and at some point in their development they
acquire their mother-tongue and speak it near-flawlessly.

Motivated by such a problem, we are interested in the following
learning problem: Suppose we want to learn a binary classification function. Our training set consists of a collection of unlabeled examples/data-points and a collection of labeled examples, but only from one class. More precisely if we represent the set of all possible instances with $\mathcal{X}$ and the binary classes set with $\mathcal{Y}=\{0, 1\}$, the training data is a set of size $N+M$ where $\{X_i, y_i\}_{i=1}^N\in \mathcal{X}\times\mathcal{Y}$ such that $y_i=1$, and another set $\{X_j\}_{j=N+1}^{N+M}$ where the class of the examples are not available. Can we still leverage the learning process, that under certain conditions, our learner encounters little or no loss as both $N$ and $M$ increase? This problem in the literature is known as the problem of Learning from Positive and Unlabeled data (LePU), or  different abbreviations PU-learning or LPU \citep{li2005learning}.

Before discussing a few real-world problems when one naturally faces LePU, we would like to make a remark.  To avoid confusion for any example $X_i$, we call $y_i$ its class, and \textit{not its label}. Instead here by labeled (or annotated) we mean a particular example $X_i$ is chosen and annotated by an expert, i.e., the expert has determined the class $X_i$ belongs to, namely $y_i$. In fact in machine learning literature ``label'' and ``class'' are used synonymously. However we will only use class to refer to $y_i$.  We make this distinction, so that following the methodology of \citep{elkan2008learning} we could introduce a new random variable $l_i$ that indicates weather $X_i$ is labeled/annotated by an expert, or not. We assign $l_i=1$ when  $X_i$ is among the annotated examples by the expert. And $l_i=0$ if that example is not annotated by the expert. In the above LePU setting, we have $l_i=1$ for $\{X_i\}_{i=1}^N$ and $l_i=0$ for $\{X_i\}_{N+1}^M+N$. The importance of introducing $l$ will become more clear in the next section.

Annotation process is an important part of data collection and can have a huge impact on the outcome and inferences of a learning algorithm that is trained based on such data. In particular in many data collection scenarios, collecting annotated data can be significantly harder/more expensive than acquiring unlabeled data. In such settings one might naturally face the LePU problem. Take the example of authenticity of online reviews. Such reviews today play an important role in people’s choice of products. In many cases businesses even hire people or create bots to generate fake reviews and ratings. One important problem then is to identify fake/deceptive reviews from authetntic ones. Based on our notation, in this case we can take $X_i$ to be the text of the review, $y_i$ to be the true class of the data point $X_i$ being deceptive (positive) or not. In some cases it is possible to find evidence to know that a given review is deceptive.\footnote{For example information such as multiple reviews from one IP address at the same time for different geographical locations.} But it is extremely difficult on such a platform to annotate an example as truthful; this is because by annotating an example as positive we have ruled out all the possible reasons that a given example is not deceptive/fake.\footnote{Such a tedious process would mean tracing the person back and making sure they had actually an experience of being in a particular restaurant/hotel or a similar commercial center that's posted on Yelp.com.} Despite the fact that there is a scarcity in the amount of available annotated data points in the case of detecting deceptive reviews, autonomous algorithms have been proposed that seem promising to solve this problem \citep{mukherjee2013yelp, jindal2010finding, ott2013negative}. Also more than 150 million reviews are available on Yelp.com of which mostly are unannotated. So if we could somehow use unlabeled data to leverage learning we would accomplish a lot considering the abundance of unlabeled data in such a domain.

Another similar situation is when we are studying a presence/absence dataset. In these situations ecologists are interested in understanding the geographical distribution of a given species in a given area. The data collection usually is a process that would lead to labeling the presence and absence of a species in a given geographical area. But ruling out all the areas from existence of that certain species would need an exhaustive search of every location in a given area, which can be practically impossible \citep{peterson2011ecological,ward2009presence, phillips2008modeling}. Needless to say high resolution information is available thanks to advances in geographic information systems and collecting ``unlabeled data'' in this case can also be done without a hitch. So again if we somehow incorporate the unlabeled data in our learning process there is a great potential to be able to improve the state of the art learning algorithms in this problem domain. 

The importance that learning under these extreme conditions and the ubiquity of facing such a situation in real-world problems has recently attracted a lot of attention to the LePU problem. In 
the next section we briefly discuss the related work on this problem.
\section{Related Work}
In machine learning One-Class Classification (OCC) \citep{moya1993one} is the problem of learning a concept/class and being able to differentiate it from other classes, with a training set consisting only of the elements of this specific class. OCC is also known under terms such as novelty detection \citep{bishop1994novelty}, outlier detection \citep{ritter1997outliers} and concept learning \citep{japkowicz1999concept}, depending on the area of use and application (for a general taxonomy see e.g., \citep{khan2009survey}).  

At first glance it might seem that LePU is a special case of OOC, but at least for the case where they are truly only two classes, virtually in all settings, unlabeled data is also available. So analogous to the extension of learning from supervised learning to semi-supervised learning, it is indeed a reasonable idea to use available unlabeled examples to leverage learning from data which bring is to the case of learning from positive and unlabeled data. An important challenge of using unlabeled data in many settings is the non-uniformity of the  distribution of unlabeled data. For example if we're interested to train a classifier to identify homepages from non-homepage webpages (e.g., as is done in \citep{yu2002pebl}), it is hard to collect unlabeled data, i.e., a corpus of webpages in general that could be uniform in every aspect and away from human bias \citep{khan2009survey}. In their seminal work, Elkan and Noto \citep{elkan2008learning}  explicitly formalize the lack of selection bias in collected data, given the class of the example. This in fact means they choose to neglect this non-uniformity and human bias in the annotation process. Then they introduce a Semi-Supervised Learning (SSL) algorithm for OCC in binary case and prove the identifiability of $p(y)$ and consequently $p(y|x)$. Before describing their work in more detail and as a reminder, in Section \ref{sec:intro} we introduced a random variable $l$, which indicates whether a given example $X_i$ with the class $y_i$ is annotated ($l_i=1$) or not ($l_i=0$). Now we are ready to introduce the Assumptions \ref{ASS1} and \ref{ASS2} incorporated in \citep{elkan2008learning}.
\begin{assumption}[No False Positive (NFP) assumption] $p(y = 1|l = 1) = 1$. This condition simply means all labeled examples belong to only the positive class. This is equivalent to assuming that there is no ``false positive'' labeling by the expert when it comes to finding the examples belonging to the positive class. Notice the use of double-quotes here: what we mean by no false positive is that if the expert is ``hunting'' for examples from the positive class, whatever they annotate will belong to the positive class, and as such their positive example detection never fails. And this, in the literature of signal detection theory, would mean they have no false positive (or that they have perfect precision) and whatever they detect indeed belongs to the positive class.
\label{ASS1}
\end{assumption}
\begin{assumption}[Selected Completely At Random (SCAR) assumption] $p(l = 1|y = 1, X = x) = p(l = 1|y = 1)$ or put it otherwise, $X$ is conditionally independent of $l$, given $y$. Elkan et. al. \citep{elkan2008learning} named this assumption as the \textit{``sampled completely at random assumption''}, meaning that the set of positive samples that are revealed to the learning algorithm are chosen randomly and identically from the set of all positive samples. As can be seen, the SCAR assumption exactly ensures that the way positive examples are chosen to be annotated are independent from their features which means the annotation process of positive examples is ``unbiased''. \iffalse As mentioned our work is built upon the method in \citep{elkan2008learning}, but with important differences. Before describing our method it is important to understand Elkan and Noto's method \citep{elkan2008learning} better.\fi
\label{ASS2}
\end{assumption}
Using the NFP assumption, Elkan et. al \citep{elkan2008learning} show that 
\begin{gather}
p(l=1|X)=p(l=1|y=1, X)p(y=1|X).
\label{eq:main_eq}
\end{gather}
Additionally from SCAR we get that $p(l=1|y=1, X)=c$ is a constant as a function of $X$, and as such from  (\ref{eq:main_eq}) it follows that \begin{gather}p(l=1|X)/c=p(y=1|X).
\label{eq:elkan_c}
\end{gather} 
In \citep{elkan2008learning}, the authors proceed with learning $p(l|X)$. Then they estimate $c$ through a holdout sample, and finally derive $p(y|X)$ through (\ref{eq:main_eq}). Since all three terms in (\ref{eq:main_eq}) are functions of $x$, for  simplicity and as a convention we will refer to these posteriors with the following short-hand interchangeably. We chose $s(x)$ for $p(l=1|X=x, y=1)$ so that $s$ stands for the initial letter in ``selection process''. Similarly we chose $t(x)$ for $p(y=1|X=x)$ where $t$ is the initial letter of ``target'' function, since the class is sometimes also known as target variable in machine learning literature. Finally we represent $p(l=1|X=x)$ with $h(x)$. Assumption \ref{ASS2} therefore is equivalent to $s(x) = c$, where $c$ is a constant independent of the values that $X$ attains.

Despite promising mathematical --and sometimes practical-- results under assumptions \ref{ASS1} and \ref{ASS2}, it is important to note that assumption (\ref{ASS2}) can be
highly violated in real-world problems. For example \iffalse in the case of presence absence studies the geographical accessibility of different areas definitely is a decisive factor in whether they will be visited and investigated for the presence of the species under study. As such the features of a location can play an important role in specifying which areas will be annotated, and therefore introduces a features-sensitive bias to the annotation process, and therefore the random variable $l$, which was defined earlier.  Or\fi  in the case of deceptive reviews in fact human judges suffer from certain biases in identifying truthful/genuine reviews (see \citep{ott2013negative} or  \citep{vrij2008detecting} for a more explicit study), which will in turn make them more likely to annotate certain examples more frequently (possibly the ones they are confident in identifying them as deceitful) than others. Another example as we discussed previously is a homepage classifier because of the hardness of collecting a corpus of webpages that is diverse and unbiased. In fact such bias definitely also affects the choice of positive examples, i.e., the way an expert would navigate webpages to find homepages to annotate them.

Therefore, we believe that $p(l=1|y=1, X=x)$ is not constant (w.r.t. to $X$) --which is assumed in the SCAR assumption-- but rather dependent on  $X$: labeling of examples in many cases is done by human experts and these experts will be sensitive (biased) towards specific features of samples. Our goal is to replace SCAR assumption with a more realistic one. One thing to notice is that both of the assumptions suggested by Elkan et. al. \citep{elkan2008learning} (NFP and SCAR) are about the data-generating process. Motivated by the same approach, namely focusing on the data-generating process, the goal of this work is to find a suitable (and in fact more flexible) assumption that would replace Assumption \ref{ASS2}. In the next section we will demonstrate how to still learn $p(y|X)$ through a dataset where only positive examples are annotated, but with a replacement of Assumption \ref{ASS2} with one that incorporates and exploits annotation process in a more meaningful way.

\section{Encoding Annotation Process in Learning}
\iffalse 
Our approach to tackle this problem, in contrast to most of the methods in LePU literature is based on the bias in annotation process; in other words, the data collection always involves some sort of bias and we believe we can use this bias to our advantage to improve the process of learning from positive and unlabeled data only. \fi
When studying LePU, one can consider two different data-generation schemes from which the training set is collected from. In the more traditional setting, we assume the training dataset consists of two parts: One is the annotated part of the sample $\{X_i, Y_i\}_{i=1}^N\sim P(X, y)$, and the other is the unannotated part of the sample  $\{X_j\}_{j=N+1}^{N+M}\sim P(X)$. This is the so-called ``case-control'' scenario,\footnote{In the gathered dataset, the examples with a recorded label are called ``control'' or ``presence'' in this setting.} such as the one in \citep{ward2009presence}.

There is however another data-generating process that one can consider for LePU problem. As we discussed, in their data-generating process, Elkan and Noto introduce a new random variable $l$ which indicates whether a given example is annotated or not to incorporate the annotation process in their modeling. In fact in this non-traditional setting they assume data is generated according to a joint distribution $p(X, y, l)$. Therefore our dataset is a sample in the form of $x_i, y_i,l_i$ where $x_i\in\mathcal{X}$, $y_i \in \{0,1\}$, and $l_i \in \{0,1\}$, s.t. $y_i$ is observed only if $l_i=1$ and it is not observed otherwise ($l_i=0$). Moreover they assume that $l_i = 1$ \textit{only} when $y_i = 1$, which represents the fact that we only observe data points that belong to the positive class.

It is important to emphasize the difference between these two approaches in tackling the LePU problem, and why we choose to consider this non-traditional setting to solve this problem. As is shown in \citep{ward2009presence}, in the case-control scenario $p(y=1)$ is not identifiable given a dataset comprised of \textit{only} unlabeled data points and data points belonging to the positive class. However when we assume the data collection and annotation through the joint distribution $p(X,y,l)$, one might still be able to infer $p(y)$   \citep{elkan2008learning}. Additionally in this way we can incorporate how the class of an instance --together with its features-- plays role in whether a given example is annotated or not. This is in contrast to the case-control study, where such  a connection is stripped away of the available dataset. This is because we do not have access to the value of $l$ for our annotated data points $X_i, y_i$ with $y_i=1$. As an example to see the importance of annotation process and its dependence on the class of an instance and its features, we can again consider the presence-absence problem: whether a location is going to be labeled is both a function of how habitable that area is for the given specie and how easy to access that location is for the expert exploring a given territory. For these reasons and because we also are interested in incorporating annotation process as a way to improve learning, we will use this non-traditional data-generation scheme for our setting. 

\section{Modeling and Exploiting Annotation Process}
Considering the issues with the method of \citep{elkan2008learning} that were described in the previous section, our goal is to present a more realistic representation of the annotation process, i.e. $p(l=1|y=1, X)$. In fact one intuitive interpretation of this posterior is \textit{``what is the propensity of a given positive example with features $X=x$ to be selected and annotated by an expert?''}. With such an interpretation it seems only natural to assume this selection function is continuous w.r.t the features of a given instance. For example in the presence-absence problem, if location $X_i$ is annotated by an expert, it is very likely that the locations close to $X_i$ would also get annotated by an expert. Another example is medical imaging: Diagnosing a cancerous tumor through imaging techniques is a challenging task for physicians, and the success of the physician --among other things-- is dependent on qualities of the image, such as the resolution, the size/shape of the suspicious abnormality in tissues, etc. It is again reasonable to assume the odds of success for a physician to identify presence/absence of a tumor, or it being malignant/benign thereof, is a smooth function of such qualities of the image. This motivates Assumption \ref{pos:smoothness} which is at the core of our modeling of the annotation process. 
% Another example, as mentioned before is the case of learning homepage webpages ($y_i=1$) from non-homepage webpages, since SCAR assumption would also imply $p(l=1|y=0, X)=p(l=1|X)$ and that would imply the uniformity of the non-homepage webpages which, as previously mentioned would overlook the complication in data structure. However the existence of 
\begin{assumption}[Smoothness assumption]
The decision/selection function $s(x)=p(l=1|X=x, y=1)$, i.e. which element of the positive class an expert chooses to be annotated, is dependent on certain  features of the instance, $x$, and it is a smoother function than $t(x)=p(y=1|X=x)$, the posterior of the instance $X=x$ belonging to the positive class.
\label{pos:smoothness}
\end{assumption}
\vspace{-.2cm}
 Smoothness above can be defined more rigorously depending on the choice of models to represent $t(x)$ and $s(x)$. No matter what choice of smoothness assumption is made though, this is a strong relaxation of the SCAR assumption in \citep{elkan2008learning}. To see this, note that $s(x)$ in \citep{elkan2008learning} is a constant function, and therefore smoother than any non-constant function. This is assuming that $t(x)$ is not constant function itself, since otherwise it means $y$ is not learnable and independent from $X$.  As we discussed the next step is to choose a proper class of functions to represent $s(x)$, and accordingly $t(x)$. We know that $p(l=1|X=x, y=1)$ indicates the choice/decision function for the expert as to \textit{``which example to choose to annotate"}. In such an interpretation $p(l=1|y=1)$ encodes the sensitivity of detecting positive examples for our expert.\footnote{In fact in a probabilistic sense $s(x)$ is the \textit{instance-based sensitivity} as it is conditioned on a given instance $X=x$. And the aforementioned sensitivity is the average of it over $p(X|y=1)$.}
 In the literature of psychophysics, such a sensitivity is modeled  with a family of functions known as \textit{``psychometric functions''} \citep{prins2016psychophysics, wichmann2001psychometric}.

 \subsection{Psychometric Functions}

Psychophysics quantitatively investigates the the property of human perception, by assessing human response/reaction to a given sensory stimulus. When the human response is confined to two possibilities, there are generally two settings for experiments in psychophysics. One is known as the 2 Alternative Forced Choice (2AFC) experiment scheme. And the other is the yes/no experiment setting. Both of these settings can be modeled using the following family of functions known as \textit{``psychometric functions''}:
\begin{align}
\Psi(x;\alpha,\beta,\gamma,\lambda)=
\gamma+(1-\gamma-\lambda)F(x;\alpha,\beta)
\label{eq:psychometric}
\end{align}
where $F$ is some element of the exponential family distributions. It is assumed that $\alpha$ and  
$\beta$ are the parameters characterizing the underlying sensory mechanism \citep{prins2016psychophysics}. The other two parameters, namely $\gamma$ and $\lambda$, do not have to do with the underlying sensory mechanism and they characterize chance performance and lapsing by the subjects \citep{prins2016psychophysics}. The parameter $\gamma$ is known as \textit{``guessing rate"}, which is a bit of a misnomer, as there are theories that human subjects never truly guess \citep{prins2016psychophysics}(see Section 4.3.1), but since it is conventional to call it guessing rate, it will be called so here as well. The idea is that if the human subject was to guess, there is a certain probability that their guess would be right, and this probability is encoded with the guessing rate $\gamma$. For example in a 2AFC performance-based task the subject is presented with two stimuli, and they are supposed to indicate which option is the one that carries the stimuli. Usually this is done by providing a pair of buttons where each of two buttons is associated with one of the options. Then by pressing the button for any given task, a subject indicates where they believe the stimuli is presented. As such, the goal is to see if the subject can identify the signal, when is forced to make choice between two offered options. In such a case, i.e., in 2AFC, it is usually assumed $\gamma$ is  $0.5$. This parameter basically defines the lower bound of the psychometric function, as can be seen in Figure \ref{fig:guessing_rate}. It is assumed that $\gamma$ belongs to $[0,1]$, $[0, 1)$ or $(0, 1)$ \citep{wichmann2001psychometric}. In what follows we assume $\gamma\in(0, 1]$.
\begin{figure}[H]
    \centering
    \includegraphics[width=0.9\linewidth]{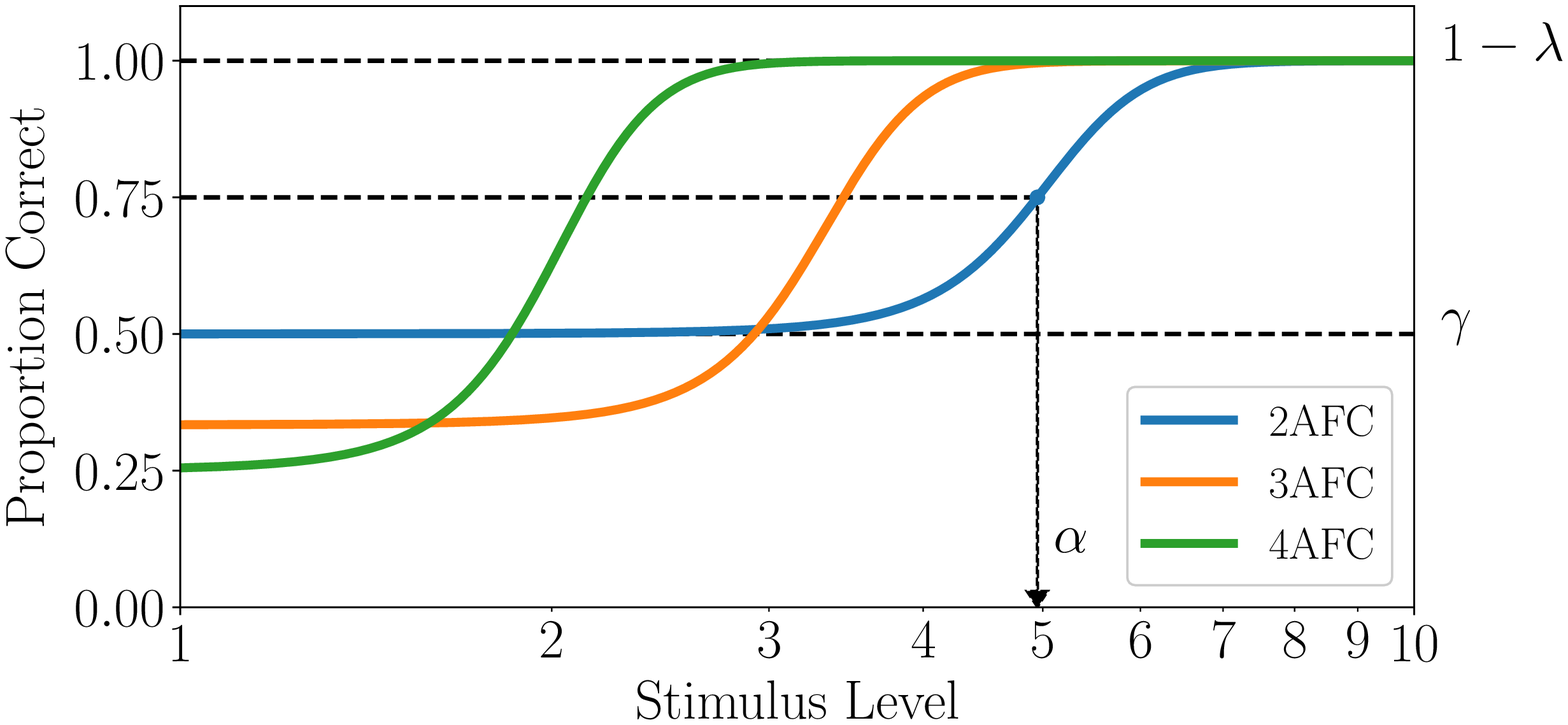}
    \caption{The guessing rate $\gamma$ for  Alternative Forced Choice tasks with different number of choices where it is mostly assumed $\gamma=1/n$ for $n$ alternative AFC or $n$-AFC, for short.} 
    \label{fig:guessing_rate}
\end{figure}
Finally $\lambda$ represents the so-called \textit{``lapse rate''}. The idea is that even when the stimulus strength is in a range that the subject would be able to detect it, sometimes they cannot do so simply due to a memory lapse, a sneeze during the experiment process, etc. This will introduce a small error in detection --despite the strong available signal-- and as such $1-\lambda$ defines an asymptote from above for $\Psi$ (similar to $\gamma$ which was lower-bounding the psychometric function). $\lambda$ is usually assumed to be small. Here we assume $\lambda\in[0, 1)$. We additionally assume that $\gamma+\lambda\leq 1$, as is done in the literature of psychopysics \citep{wichmann2001psychometric}.
\subsection{Two Classes of Models}
Now that we defined the psychometric function and gave interpretations of it in the context of psychophysics, here we describe in more detail how we will incorporate such a function for our purposes. We will take $F$ in (\ref{eq:psychometric}) to be a sigmoidal function, i.e. $F(x;\alpha, \beta)=\sigma(x;\alpha,\beta)=\sigma(\alpha^Tx+\beta)$ where $\sigma(x):=(1+\exp(-x))^{-1}$. Then
\begin{gather}
p(l=1|X) = \Psi(x;\alpha,\beta,\gamma,\lambda) \sigma(x;a,b).
\label{eq:psychm_posterior}
\end{gather}
where  $\sigma(x;\alpha, \beta):=\sigma(\alpha^Tx +\beta)$. We will be considering two settings for the above model. In the first setting we will set $\gamma=\lambda=0$. Then $p(l=1|X)$ will be the product of two sigmoidal functions. As such we call this first model Sigmoidal Product Model (SPM) with symbol $\mathbf{\Sigma}$ representing the family of such parametric models. In the second setting we will be considering $\gamma$ and $\lambda$ to be free parameters beside $\alpha$ and $\beta$, and this second family is  represented with $\mathbf{\Psi}$. We will call the elements of $\mathbf{\Psi}$ PsychM, as a short form for Psychometric Model. Notice that SPMs are a subset of PsychM family, where $\gamma=\lambda=0$. Additionally it is easy to see that Elkan et al's \citep{elkan2008learning} family of models described in (\ref{eq:main_eq}) is a special case of SPMs (with $\alpha=0$ and $\beta=-\log(c/1-c)$) and PsychMs (with e.g., $\gamma=1-\lambda=c$). Also for brevity, we will sometime refer to all the parameters of the $s(x)$ with $\theta_s$ and all the parameters of $t(x)$ with $\theta_t$.  To infer the model parameters, in both cases we will maximize the conditional log-likelihood function of the observed variables, i.e. $\{l_i\}_{i=1}^N$ conditioned on $\{X_i\}_{i=1}^N$ using (\ref{eq:psychm_posterior}) as follows
\begin{align}
\mathcal{L}\mathcal{L} &= \log  p(\{l_i\}_{i=1}^N|, \{X_i\}_{i=1}^N, \theta_s, \theta_t)\label{eq:conditional_ll}\\
&=\sum_{i=1}^{N} \left[\vphantom{\biggr.}l_i\log p(l=1|X=X_i, y=1, \theta_s)\right.\notag\\ 
&~~~+ \left. l_i\log p(y=1|X=X_i, \theta_t) \right.\notag\\
&~~~~~~+ \left. (1-l_i)\log \Big(1- \big[p(l=1|X=X_i, y=1, \theta_s)\right. \notag\\
&~~~~~~~~~\times \left. p(y=1|X=X_i, \theta_t)\big]\Big)\vphantom{\biggr.}\right].\notag
\end{align}
Derivation of (\ref{eq:conditional_ll}) is available in supplementary material. Note that in the case of SPMs, $\theta_s=(\alpha,\beta)$, whereas in the case of PsychMs it is $\theta_s=(\alpha,\beta,\gamma,\lambda)$. We can use MLE to estimate the parameters of the families of the model above, since both of these families are identifiable under some mild conditions as it is shown below.
\subsection{Identifiablity}
In what follows we will investigate the identifiability of PsychMs and SPMs. In fact we will show that the class $\mathbf{\Sigma}$ is identifiable up to a permutation of parameters; notice that in fact the product of two sigmoidal functions do not change when we swap  their parameters, i.e.
% \Naji{My question is that if the following is enough? In other words if we just show that one can conclude from $p_{\theta}(X)=p_{\theta'}(X)$ that $\theta=\theta'$ is enough and one does not consider it for multiple sample points, i.e. to conclude from $\prod_i p_{\theta}(X_i)=p_{\theta'}(X_i)$ that $\theta=\theta'$?}
\begin{restatable}[Identifiability of SPM]{theorem}{spmidentifiability}
\label{thm:SPM_identifiability}
Assume that the support of $X$ is $\mathbb{R}^n$, where $n\geq 1$. Then $\mathbf{\Sigma}$ is identifiable, i.e. for any set of parameters $(\alpha, \beta, a,b)$ and $(\alpha', \beta', a',b')$ if it is the case that 
\[
\sigma(x;\alpha, \beta)\sigma(x;a, b) = \sigma(x;\alpha',\beta')\sigma(x;a', b'),
\]
it follows that
$(a, b, \alpha, \beta)=(a', b',\alpha', \beta')$ or $(\alpha, \beta, a, b)=(a', b',\alpha', \beta')$.
\end{restatable}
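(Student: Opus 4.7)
The plan is to convert the sigmoid-product identity into an identity among real-exponential functions on $\mathbb{R}^n$ and then invoke linear independence of distinct exponentials.

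First I would take reciprocals of the hypothesized identity, which is equivalent to
\[
(1+e^{-\alpha^T x-\beta})(1+e^{-a^T x-b}) \;=\; (1+e^{-\alpha'^T x-\beta'})(1+e^{-a'^T x-b'}) \qquad \forall x\in\mathbb{R}^n.
\]
Expanding both products and cancelling the constant term on each side yields
\[
e^{-\alpha^T x-\beta} + e^{-a^T x-b} + e^{-(\alpha+a)^T x-(\beta+b)} \;=\; e^{-\alpha'^T x-\beta'} + e^{-a'^T x-b'} + e^{-(\alpha'+a')^T x-(\beta'+b')}.
\]
Since the functions $x\mapsto e^{\mu^T x}$ are linearly independent over $\mathbb{R}$ for distinct $\mu\in\mathbb{R}^n$, after merging any entries sharing an exponent the two formal multisets
\[
\bigl\{(-\alpha, e^{-\beta}),\,(-a, e^{-b}),\,(-(\alpha+a), e^{-\beta-b})\bigr\}, \qquad \bigl\{(-\alpha', e^{-\beta'}),\,(-a', e^{-b'}),\,(-(\alpha'+a'), e^{-\beta'-b'})\bigr\}
\]
of (exponent, positive coefficient) pairs must coincide.

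I would then analyze the generic case where $-\alpha,-a,-(\alpha+a)$ are pairwise distinct (and likewise for the primes). The decisive observation is that on each side the third entry is structurally determined by the first two: its exponent is the sum of the other two exponents, and its coefficient is the product of the other two coefficients. A short combinatorial check shows that any bijection between the two $3$-element multisets that pairs the left ``sum entry'' with a non-sum entry on the right forces $\alpha=0$ or $a=0$, contradicting distinctness. Hence the sum entries are matched; this yields $\alpha+a = \alpha'+a'$ and $\beta+b = \beta'+b'$. The residual $2\times 2$ matching of $\{(-\alpha, e^{-\beta}),(-a, e^{-b})\}$ with its primed counterpart then produces either $(\alpha,\beta,a,b)=(\alpha',\beta',a',b')$ or the swap $(\alpha,\beta,a,b)=(a',b',\alpha',\beta')$, as required.

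The main obstacle will be the bookkeeping for the degenerate configurations in which some of $-\alpha,-a,-(\alpha+a)$ coincide (the cases $\alpha=0$, $a=0$, $\alpha=a$, or $\alpha=-a$, and their primed analogues). In each such configuration the left multiset collapses to fewer than three distinct exponents and the coefficient at each surviving exponent becomes a sum of the original $e^{-\beta}, e^{-b}, e^{-\beta-b}$. The matching with the right-hand side forces the same collapse pattern on the primed parameters, and the parameters are then pinned down by the same linear-independence principle together with the multiplicative relation $e^{-\beta-b} = e^{-\beta}\cdot e^{-b}$ which rules out spurious coefficient matchings. The underlying argument — linear independence of distinct exponentials plus the sum/product coupling between the exponents and coefficients — is uniform across these sub-cases.
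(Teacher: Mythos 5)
Your proposal is correct, and it takes a genuinely different route from the paper's. Both arguments begin identically: clearing denominators and cancelling the constant term to reach the three-exponential identity $e^{-\alpha^T x-\beta}+e^{-a^T x-b}+e^{-(\alpha+a)^T x-(\beta+b)}=e^{-\alpha'^T x-\beta'}+e^{-a'^T x-b'}+e^{-(\alpha'+a')^T x-(\beta'+b')}$, which is exactly the paper's equation (\ref{eq:LRID}). From there the paper reduces to one dimension and proceeds analytically: it splits into five sign cases for $(a,\alpha)$, takes limits as $x\to\pm\infty$ to identify dominating exponents, differentiates the identity and evaluates at $x=0$ to rule out borderline configurations such as $\alpha'=0$, and handles negative slopes by multiplying through by $e^{|\alpha| |x|}$ or substituting $x\mapsto -x$. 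You instead invoke linear independence of the characters $x\mapsto e^{\mu^T x}$ for distinct $\mu\in\mathbb{R}^n$, equate the merged multisets of (exponent, positive coefficient) pairs, and exploit the structural coupling that one exponent is the sum of the other two while its coefficient is their product; your combinatorial check is sound (matching the sum entry to a non-sum entry forces $\alpha=0$ or $a=0$, contradicting distinctness of exponents), and the residual $2\times 2$ matching gives exactly the identity-or-swap conclusion. Your approach buys several things: it works directly in $\mathbb{R}^n$ for all $n\geq 1$, sidestepping the paper's somewhat informal dimension reduction (the paper says to ``set $x=e_i$'', which as written evaluates the identity at single points rather than along coordinate lines, and also leaves implicit why the identity-versus-swap choice is consistent across coordinates); it replaces all limit and derivative manipulations with one algebraic lemma; and it confines the degeneracies to transparent multiset bookkeeping (note that $\alpha=-a$ with $\alpha,a\neq 0$ actually leaves the three exponents distinct, so your generic argument already covers it, and in the genuinely collapsed case $\alpha=a$ the coefficients are pinned down as the two roots of a common quadratic via the sum/product relations, as you indicate). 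What the paper's asymptotic route buys in exchange is self-containedness: it needs only elementary calculus, not the linear independence of exponentials. Your degenerate cases are sketched rather than executed, but the mechanism you describe does go through, so I see no gap in the plan.
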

The proof of this theorem is presented in the supplementary material. We will show $\mathbf{\Psi}$ is \textit{``generically identifiable''}. We call a model class generically identifiable if it is identifiable almost everywhere  in terms of Lebesgue measure \citep{allman2009identifiability}. In fact in this case the measure zero set where the parameters of $\mathbf{\Psi}$ are not identifiable are when $\gamma+\lambda=1$, $\gamma=0$, $\lambda=0$ and $\alpha=0$. As such from this point on we will assume $\mathbf{\Psi}$ only contains PsychM models with $\alpha\neq 0$, $\gamma+\lambda<1$ and $\gamma > 0$. Additionally our proof uses the assumption that $X\in\mathbb{R}^n$ with $n>1$. The case of $n=1$ would need further consideration.
\begin{restatable}[Identifiability of PsychM]{theorem}{psychmidentifiability}
Assume that the support of $X$ is $\mathbb{R}^n$ s.t. $n\geq 2$. Then for $h(x;\alpha, \beta,a,b,\gamma,\lambda)\in\mathbf{\Psi}$ and $h'(x;\alpha', \beta',a',b',\gamma',\lambda')\in\mathbf{\Psi}$ s.t. $h(x)=h'(x)$,
% \begin{enumerate}
%     \item $\gamma>0$ then 
\[(a, b, \alpha, \beta, \gamma,\lambda)=(a', b',\alpha', \beta', \gamma',\lambda').\] 
% \item $\gamma=0$, then 
% \[
% (a, b, \alpha, \beta, \gamma,\lambda)=(a', b',\alpha', \beta', \gamma',\lambda')\] or 
% \[(a, b, \alpha, \beta, \gamma,\lambda)=(\alpha', \beta',a', b', \gamma',\lambda')\]
% \end{enumerate}
\label{thm:PsychM_identifiability}
\end{restatable}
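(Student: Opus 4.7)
The plan is to combine asymptotic directional analysis with the SPM identifiability result already proved (Theorem~\ref{thm:SPM_identifiability}). The starting observation is the algebraic decomposition
\[
h(x)\;=\;\gamma\,\sigma(a^{T} x+b)\;+\;(1-\gamma-\lambda)\,\sigma(\alpha^{T} x+\beta)\,\sigma(a^{T} x+b),
\]
which re-expresses a PsychM as a single sigmoid plus an SPM weighted by $(1-\gamma-\lambda)$. If the quadruple $(\gamma,\lambda,a,b)$ can be identified directly from $h$, then subtracting $\gamma\,\sigma(a^{T} x+b)$ from both sides of $h=h'$ leaves an SPM identity and Theorem~\ref{thm:SPM_identifiability} finishes the job; the permutation ambiguity built into that theorem is automatically resolved because $(a,b)$ has already been matched and any swap would force $(\alpha,\beta)=(a,b)$, a case in which the desired conclusion is trivially true.

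To extract $(\gamma,\lambda)$ and the directions of $a$ and $\alpha$, I would examine $\lim_{t\to\infty}h(tv)$ for unit vectors $v\in\mathbb{R}^n$. Direct inspection of the sigmoid limits yields
\[
\lim_{t\to\infty}h(tv)\;=\;\begin{cases}0,& a^{T}v<0,\\ \gamma,& a^{T}v>0\ \text{and}\ \alpha^{T}v<0,\\ 1-\lambda,& a^{T}v>0\ \text{and}\ \alpha^{T}v>0.\end{cases}
\]
When $n\geq 2$ and $a\not\parallel\alpha$, all three open regions are nonempty subsets of $S^{n-1}$, and the standing hypotheses $\gamma>0$ and $\gamma+\lambda<1$ force $0<\gamma<1-\lambda$, making the three limit values numerically distinct. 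Hence from $h$ alone one reads off the half-space $\{v:a^{T}v<0\}$ (the direction of $a$), the numerical values $\gamma$ and $1-\lambda$, and the cone $\{v:a^{T}v>0,\alpha^{T}v>0\}$ (the direction of $\alpha$). A first-order Taylor expansion of the sigmoids then gives, e.g., $h(tv)-\gamma$ as a linear combination of $e^{t\alpha^{T}v+\beta}$ and $e^{-ta^{T}v-b}$ whose rates depend linearly on $v$; varying $v$ recovers $a,\alpha$ as full vectors and the prefactors recover $b,\beta$. With $(\gamma,\lambda,a,b)=(\gamma',\lambda',a',b')$ now matched, cancellation in $h=h'$ reduces the problem to
\[
\sigma(\alpha^{T}x+\beta)\,\sigma(a^{T}x+b)\;=\;\sigma(\alpha'^{T}x+\beta')\,\sigma(a^{T}x+b),
\]
and Theorem~\ref{thm:SPM_identifiability} delivers $(\alpha,\beta)=(\alpha',\beta')$.

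The principal obstacle is the degenerate configuration $\alpha=c\,a$ for some $c\neq 0$, in which $h$ collapses to a function of the single variable $u=a^{T}x$ and one of the three asymptotic cones becomes empty; the directional limits then produce only two distinct values instead of three, so the generic argument breaks down. My plan for this case is to analyze the sub-leading exponential expansions of $\tilde h(u):=h(u\hat{a})$ at $u\to\pm\infty$: at each end the approach to the asymptote is governed by a sum of two exponentials with rates $\|a\|$ and $c\|a\|$, whose coefficients jointly encode $b,\beta,\gamma,\lambda$. Matching the leading and sub-leading rates and amplitudes at both ends recovers every remaining parameter, provided the two rates are distinct; the borderline sub-case $c=1$ (where the two exponential modes coalesce and, if also $\beta=b$, the SPM term degenerates to $\sigma^{2}$) will require an additional argument based on the second-order corrections in the sigmoid expansion, or else must be treated as part of the measure-zero exception set alluded to in the ``generically identifiable'' formulation.
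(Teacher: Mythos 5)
Your main line of attack is genuinely different from the paper's. The paper proves Theorem~\ref{thm:PsychM_identifiability} by taking logarithms, differentiating twice in one coordinate to obtain $L_1$ functions, applying a one-dimensional Fourier transform, and then killing cross terms with an averaging lemma (Lemma~\ref{lem:fourier_average}, taken from \citep{ma2018semiparametric}) combined with open-set/hyperplane arguments (Lemma~\ref{lem:open_mapping}) and a separate lemma forcing $a=a'$ (Lemma~\ref{lem:aa_prime}). You instead read $\gamma$, $1-\lambda$, and the directions of $a$ and $\alpha$ off the three-valued directional limit function $v\mapsto\lim_{t\to\infty}h(tv)$, recover the remaining magnitudes and offsets from first-order exponential corrections, and reduce to Theorem~\ref{thm:SPM_identifiability}. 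In the generic regime where $a$ and $\alpha$ are linearly independent (so that, with $n\geq 2$, all three cones on the sphere are nonempty and $0<\gamma<1-\lambda$ makes the three limit values distinct), this is sound and more elementary and transparent than the Fourier route; your observation about the SPM swap ambiguity is also correct, and in fact once $(\gamma,\lambda,a,b)=(\gamma',\lambda',a',b')$ you can simply cancel the strictly positive factor $\sigma(a^Tx+b)$ and use injectivity of $\sigma$, with no appeal to Theorem~\ref{thm:SPM_identifiability} needed.

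The gap is the degenerate regime, and it is not optional. As restricted in the paper, $\mathbf{\Psi}$ excludes only $\alpha=0$, $\gamma=0$, and $\gamma+\lambda=1$; it does \emph{not} exclude $\alpha=c\,a$ (including $\alpha=\pm a$) or $a=0$, so you cannot shunt these configurations into the ``measure-zero exception set'' — the theorem as stated claims identifiability there, and the paper's Fourier argument covers them (the failures of its proportionality conditions are treated case by case, using Lemmas~\ref{lem:aa_prime} and \ref{lem:open_mapping}). Your two-exponential expansion plan plausibly handles $\alpha=c\,a$ with $|c|\neq 1$ (the rates $\|a\|$ and $|c|\|a\|$ separate, and the coefficient signs distinguish them), and matching of the limit-value sets does force $h'$ to collapse along the same axis; but at $|c|=1$ the modes coalesce and the rate-$\|a\|$ coefficient — e.g.\ $(1-\gamma-\lambda)e^{\beta}-\gamma e^{-b}$ for $c=-1$ — can vanish, so leading/sub-leading matching yields nothing, exactly the sub-case you leave open. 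You would also need to exclude a $c>0$ profile (asymptote $1-\lambda$) coinciding with a $c'<0$ profile (asymptote $\gamma'$) when $1-\lambda=\gamma'$ numerically. Finally, $a=0$ likewise collapses the directional limit to two values and escapes your analysis entirely; the paper devotes Lemma~\ref{lem:one_dim_identifiability} precisely to that configuration. Until you carry out the higher-order expansions (or a one-dimensional argument in the style of Lemma~\ref{lem:one_dim_identifiability}) for $\alpha \not\parallel a$ failing with $\|\alpha\|=\|a\|$ and for $a=0$, the proposal does not establish the theorem as stated.
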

The proof of Theorem \ref{thm:PsychM_identifiability} is presented in the supplementary material.
% The proof is in the appendix
% \Naji{I am assuming that the support of $X$ is the whole $\mathbb{R}^n$ is that allowed?}
%%where  $\sigma(x;\alpha, \beta):=\sigma(\alpha^Tx +\beta)$. We will be considering two settings for the above model. In the first setting we will set $\gamma=\lambda=0$. Then $p(l=1|X)$ will be the product of two sigmoidal functions. As such we call this first model Sigmoidal Product Model (SPM) with symbol $\mathbf{\Sigma}$. In the second setting we will be considering $\gamma>0$ and $\lambda$ to

%%\section{Algorithms and Experiments}
\section{Algorithms}
As previously discussed, our algorithms for both SPM and PsychM models are based on MLE estimators. Additionally we introduce a novel way of regularization for our  models; an important part of our work is to pay a closer attention to the data-generating process by separating the modeling of $s(x)$ and $t(x)$ from each other. With doing so, we get the ability to separately regularize these two parts of the whole model. To describe the situation in more detail let us consider the parameters of PsychM model (the argument for SPM is similar), which are $\theta_s=(\alpha, \beta, \gamma,\lambda)$ and $\theta_t=(a, b)$. The weight vector is part of the logistic regression function that is regularized to enforce a certain structure on the inferred model, such as sparsity of the weight vector, which in turn would reduce the number of active features in the learnt model. Since the classification function $t(x)$ and the selection function $s(x)$ are modeling two separate processes, we will penalize the weight vector for them independently.  As such our loss function for minimizing the likelihood will be as follows:
\begin{gather}
l(\mathcal{D}; \theta_s, \theta_t) = -\mathcal{L}\mathcal{L} + C_\alpha \|\alpha\|_2^2 + C_a\|a\|_2^2,
\label{eq:main_loss}
\end{gather}
Where we incorporated two regularization coefficients for two separate parts of the model. The idea is that we need to penalize the weight vector of selection function $s(x)$ in a possibly different manner in comparison to penalizing the weight vector of the classification function $t(x)$. The choice of $C_\alpha$ and $C_a$ then can be done using usual methods in model selection such as cross-validation. Notice that we can use other norms in (\ref{eq:main_loss}) depending on our prior understanding of the weight vector for each process. For example, depending on the feature vectors, it might be reasonable to assume humans --who are the experts annotating the data in most cases-- will use a smaller set of features than the real classification function and as such we might replace $\|\alpha\|_2^2$ in (\ref{eq:main_loss}) with $\|\alpha\|_1$, to ensure sparsity of $\alpha$. 

Now that we have defined the main loss function, we proceed to describe our learning algorithms for these two types of models. We divide this section into two subsections describing the algorithms for each class separately.
\subsection{Learning Parameters of SPM}
Here we present the algorithm for learning the parameters of the SPM model. The idea is to maximize the marginal likelihood $p(l|X)$ and choose among the learned parameters $\theta_1$ and $\theta_2$ the one with larger norm as $\theta_t$, i.e. we would choose $\theta_1=\theta_t$ if $\|\theta_1\|_2>\|\theta_2\|$ and otherwise $\theta_2=\theta_t$. This choice is motivated by Assumption \ref{pos:smoothness}, since the sigmoidal function with the smaller norm will be smoother than the sigmoidal function with the larger norm. Algorithm \ref{alg:SPM_SGD} presents the psuedocode for this algorithm.
\begin{algorithm}[!h]
\DontPrintSemicolon
\caption{SPM learning algorithm}
\label{alg:SPM_SGD}
\tcp{Estimating $\theta_s$ and $\theta_t$}
Use L-BFGS \citep{nocedal1980updating} (for low dimensions) or Nadam \citep{dozat2016incorporating} the loss function $l$ in (\ref{eq:main_loss}) and learn $\theta_1$ and $\theta_2$, the parameters of the SPM.\\
\tcp{Choosing the right permutation}
Choose whether $\theta_1=\theta_s$ or $\theta_2=\theta_s$ based on Assumption \ref{pos:smoothness} to be "if $\|\theta_1\|_2<\|\theta_2\|_2$ then $\theta_1 = \theta_t$". This is because the sigmoidal function that has parameters with larger norm is a steeper one and therefore less smooth.\\
\tcp{Returning the classifier}
The sigmoidal model with $\theta_t$ as its parameter is used for classification.
\end{algorithm}
\subsection{Learning Parameters of PsycM}
The implementation of algorithm for PsychM was a bit more challenging, considering that the conditional log-likelihood function presented in (\ref{eq:conditional_ll}) is a non-concave function. Additionally in this case we are dealing with a constrained optimization problem due to the fact that $\gamma$ and $\lambda$ are bounded variables. Although algorithms like Sequential Quadratic Programming (SQP) exist for constrained optimization (See e.g. \citep{nocedal2006sequential}, Chapter 18) , for high dimensions such algorithms suffer from the curse of dimensionality as they rely on the calculation of the Hessian. Application of interior methods such as the Barrier method (See e.g. Chapter 19 of \citep{nocedal2006sequential}) did not help in practice. For that reason to enforce the constraints on $\gamma$ and $\lambda$ we reparameterize them with
\begin{gather}
\gamma=\frac{|{\gamma'}|}{1+|{\gamma'}|+{|\lambda'}|}\ {\rm and\ } \lambda=\frac{|{\lambda'}|}{1+|{\gamma'}|+|{\lambda'}|}.
\label{eq:reparam}
\end{gather}
Then we use Algorithm \ref{alg:psychm_algorithm} to learn the parameters of PsychM.
\begin{algorithm}[!h]
\setcounter{AlgoLine}{0}
\DontPrintSemicolon
\caption{PsychM learning algorithm}
\label{alg:psychm_algorithm}
\tcp{Estimating $\theta_s$ and $\theta_t$}
Use Adam to minimize the loss function defined in (\ref{eq:main_loss}) after reparametrizing it using (\ref{eq:reparam}), and learn $\theta_t$ and $\theta_s$, the parameters of the PsychM.\\
\tcp{Returning the classifier}
Use the learnt $\theta_t$ as the parameter vector for the sigmoidal function representing the classifier.
\end{algorithm}

\section{Experimental Results}
In what follows we compared the performance of our algorithms to two other LePU algorithms, one of which is the method proposed in \citep{elkan2008learning}, and the other is what we call the ``Na\"ive method" to be introduced below. We also include an unrealistic method to show the limit of the performance of any method. We present this results breaking it down to performance with synthetic data and performance with real-world data. 
\subsection{Experiments with Synthetic Data}
\label{subsec:synthetic}
To evaluate the success of our algorithms empirically, we have generated a dataset of $\{(l_i, y_i, X_i)\}_{i=1}^N\sim P(l, y, X)$, with 
\[
P(l, y, X) = p(l|y,X)p(y|x)p(X),
\]
where $p(l|y=1, X)$ is chosen to be a PsychM model with 
\[
a,\alpha\sim \mathcal{N}_{5}(0, \rho_1^2I_{5}) + KR_{5}.
\]
Here $I_{5}$ is the 5 dimensional identity matrix and $R_{5}$ is the 5-dimensional Rademacher random vector, i.e., $R_i=1\ {\rm or} -1$ with probability $1/2$. Radmchaer random vector is added to the multivariate normal distribution to randomly shift the centers of the multivariate  Gaussian distributions that $\alpha$ and $a$ are chosen from. Finally $b,\beta\sim N(0, \rho_2^2)$. In the following experiment we chose $N=5000$, $\gamma=\lambda=0.05$. $\rho_1=10$, $\rho_2=1$ and finally $K=5$. 
% To compare this 5 methods to each other we will consider multiple settings for $\rho_1, \rho_2$, and $K$. We will consider the following datasets:
% \begin{enumerate}[label=(\roman*)]
% \label{item:sample_case_1}
%     \item $N=1000$. $\{X_i\}_{i=1}^N$ are chosen from a mixture of $20$ $10$-dimensional isotropic Gaussian distributions with a standard deviation of $0.1$ in each dimension, and with their centers centers uniformly sampled from $[-100, 100]^5$. We also set $\gamma=0.15$, $\lambda=0.1$ $\rho_1=20$, $\rho_2=1$ and $K=10$.
%     \label{item:sample_case_2}
% \end{enumerate}
\iffalse In case \ref{item:sample_case_1} values of $\gamma$ and $\lambda$ are chosen small so that the data-generating model is close to SPM. Additionally the input distribution for $X$ is simple and chosen to be uniform. Finally $K=0$ which makes the weight vectors to be from isotropic Gaussian distributions. Case \ref{item:sample_case_2} describes a more complicated input distribution which is multi-modal and far from uniform. Also $K=10$ which is to break the Gaussianity of the weight vectors by adding a Radmacher vector it.\fi 
For comparing these models in both of the above cases, we split the dataset to training and test sets of equal size. For any such pair of sets we train 5 models on the training set. These models are as follows:
\begin{enumerate*}[label=(\roman*)]
\item Elkan et. al's model \citep{elkan2008learning}, where we choose $p(l|X)$ to be a sigmoidal function. \item The ``Na\"ive classifier'' is basically a classifier where we apply logistic regression to the training set comprising of positive examples, and assuming that unlabeled data belong to the negative class. \item The ``Real classifier'' is trained given the full access to the class each example belongs to ($y_i$'s)--note that this method is not realistic, but is shown to indicate the limit in the performance of any practical method, because in reality we do not have access to $y_i$'s, what makes our problem particularly challenging. And finally \item SPM models and \item PsychM models as introduced previously. \end{enumerate*} For hyperparameter tuning in regularization of all these methods we use 3-fold Cross-Validation (CV), where Brier score \citep{brier1950verification} is used to choose the most suitable model in classifying $(X_i, l_i)$ pairs. For any score-based classifier $M$ with score $M(X)$ over instance $X$, Brier score is just the mean squared error over the a given dataset $\mathcal{T}$. i.e.,
\[
Brier(M)=\frac{1}{|\mathcal{T}|}\sum_{(X_i, y_i)\in\mathcal{T}}(M(X_i)-y_i)^2.
\]
It is well-known that Brier score can be used to choose well-calibrated models, i.e. models such that the score $M(X)$ is in fact a good estimation of $p(y=1|X)$ for the true distribution $p(.|X)$. For the model selection through trial and error among possible score choices including area under the ROC curve (the higher, the better), Average Precision Curve (the higher, the better), and Brier score (the lower, the better), Brier score had the best success in finding a good model to approximate $p(y|X)$, and we chose it for that reason.

Then we compare the performance of these methods on the test set comprising of $(X_i, y_i)$ pairs in terms of classification scores such as the $f1$ score, classification accuracy, and also  Brier score. We repeat these random trials 500 times and  report the results in Table \ref{tab:synthetic}, with the best values depicted in \textbf{bold} and the second best depicted in \textit{italic}. 
Since the results by the ``Real classifier'' is not achievable by any LePU method, they are not highlighted.
\begin{table}[h]
\label{tab:freq}
\centering
\scalebox{0.8}{
  \begin{tabular}{|p{1.1cm}|c|c|c|c|c|l}
    \toprule
    & SPM & PsychM & Na\"ive & Elkan & Real\\
    \midrule
    f1 & {\bf 0.8920} & {\it 0.8876} & 0.8807& 0.8826 & 0.9598\\
    \midrule
    AUC & {\it 0.9748} & {\bf 0.9761} & {\it 0.9748}& 0.9735 & 0.9931\\
    \midrule
    % APS & 0 & 0 & 0& 0 & 0\\
    % \midrule
    test acc. & {\bf 0.9023} & {\it 0.8989} & 0.8931& 0.8946 & 0.9605\\
    \midrule
    Brier & {\bf 0.0779} & {\it 0.0790} & 0.0900 & 0.0887 & 0.0304\\
\bottomrule
\end{tabular}
}
 \caption{Average score of methods on 500 trials described above.}
\label{tab:synthetic}   
\end{table}

One can see that  SPM and PsychM outperform all other methods on all the scores. Here, however, we also perform a test to assure the significance of our results. Notice that all the score functions above are applied to a test set, denoted by  $\mathcal{T}$. Now for any model $M$ and any score function $S$, $S_{\mathcal{T}}(M)$ is the score of model $M$ under the test set $\mathcal{T}$. Now for any two models $M_1$ and $M_2$ and test set $\mathcal{T}$, $D_{\mathcal{T}}(M_1, M_2):=S_{\mathcal{T}}(M_1)-S_{\mathcal{T}}(M_2)$ will measure the respective success of $M_1$ to $M_2$ over $D$ using $S$. So in these 500 trials we have test sets $\{{\mathcal{T}}\}_{i=1}^{500}$, calculate $S_{\mathcal{T}}^i(M_1)-S_{\mathcal{T}}^i(M_2)$, and consider its empirical distribution. We say $M_1$ is $\alpha$-significantly better than $M_2$ if the $\alpha/2$-quantile of empirical distribution of values $T_D^i(M_1, M_2)$ is non-negative. We chose the significance level of $\alpha=0.9$. Table \ref{tab:significance} shows the results of this cross-comparisons, where for row $i$ and $j$, we have calculatec $D_{\mathcal{T}}(M_i, M_j)$ as defined above and if the $\alpha/2$ quantile of the empirical score distribution was larger than zero the results are deemed significant (where we put a checkmark at that location of the table below). 
\begin{table}[H]
\centering
\scalebox{0.8}{ 
  \begin{tabular}{c|cccccl}
    \toprule
     & PsychM & Na\"ive & Elkan & Real\\
    \midrule
    SPM &   \checkmark & \checkmark& \checkmark & \checkmark\\
    \midrule
    PsychM    &  & \xmark& \xmark & \xmark\\
    \midrule
     Na\"ive   &  & & \xmark & \xmark\\
    \midrule
    Elkan &   & & & \xmark\\
\bottomrule
\label{tab:significance}
\end{tabular}
}
 \caption{The comparison of the f1 score as described above for every two pair of models. Here the training dataset is small and $\gamma=0.05$ (small) and $\lambda=0$, so the PsychM method and SPM are close to each other; in the result SPM outperforms PsychM because it is simpler (involving fewer parameters).}
\label{tab:significance}   
\end{table}
As can be seen in Table \ref{tab:significance} the f1 score for SPM is significantly better than other methods in this setting of the parameters. For larger values of $\gamma$ and $\lambda$ PsychM turns to outperform all the other methods. And as such when the sample size is significantly large both SPM and PsychM do better than Elkan's methods and the Na\"ive method. Notice that this is expected since both of these models are special cases of PsychM models, and SPM. To realize Elkan's method, we need to take the selection function to be the constant $c$ in (\ref{eq:elkan_c}) and to realize the Na\"ive method we just need to set this constant parameters of selection function such that $c=1$. 
\subsection{Experiments with Real-World Data: Neuroscience}
\label{subsec:real_world}
The SCAR assumption is explicitly assumed to create the final dataset \citep{bekker2018learning, ward2009presence}. Here we introduce a family of datasets that are well-suited for the learning and assessment of LePU methods, and introduce a general method to create datasets for LePU based on experimental data ubiquitously available online.

In recent years there has been an abundance of datasets on human perception, recognition, and assessment on different visual/auditory tasks. Such datasets in many cases can be divided into correct, incorrect, and undecided assessment by humans. Since the ground truth in these cases is known, these datasets seem to be a good candidate to be dealt with LePU learning and our methods.
We focus here on a dataset which is presented in  \citep{delorme2004interaction}.\footnote{The dataset itself can be downloaded from \url{https://sccn.ucsd.edu/~arno/fam2data/publicly_available_EEG_data.html}.} We briefly describe the experimental paradigm of this dataset. 14 subjects (7 male, 7 female) participated in a study where they are performing a go/no-go categorization task. Here we focus on a sub-task of this study where the subjects had to decide if there is an animal in the shown picture/stimulus or not by pressing either of two possible keys. There were 10 blocks of trials, with 100 pictures in each block. In each block an equal number of animal pictures and non-animal pictures are shown to the subject, and the stimulus exposure was confined to 200ms. During all the block trials the brain activity of the subjects was recorded using EEG brain imaging technique. For the purpose of our work the EEG activity was not relevant and as such will be omitted. 

The dataset can be summarized in the format of $(X, y_{sub}, y_{real})$ where $X$ is the image displayed to the subject, $y_{sub}$ is the class given by the subject to $X$, and $y_{real}$ is the real class of the instance $X$. The subject needs to decide for any image $X$, whether it is an animal picture ($y_{real}=1$) or not ($y_{real}=0$). Knowing that we have only two classes, we let $l=y_{real}\land y_{sub}$, where $\land$ is the logical AND operator. This in a way enforces the condition that subjects only classify positive examples, i.e. Assumption \ref{ASS1}.\footnote{Unfortunately a two-alternative forced choice or go/no-go experimental task designs are pretty common in psychophysics and neuroscience and finding dataset that subjects are allowed to be indecisive is uncommon.} Notice that here the data relevant to EEG recordings are discarded. 

Because for image classification the dataset is quite small, we used a pretrained neural network known as VGG16  \citep{simonyan2014very} as an initial feature extractor for our task. We pass our initial images through this pre-trained neural net and take the activation of the first fully connected layer of this network as the feature set for all of our pictures in the dataset, and $X_i$ below refers to this embedding using VGG16.

After this pre-processing we applied LePU algorithms to these featurized datasets. The training-test split and the choice of models and model parameters are identical to the those used in Subsection \ref{subsec:synthetic}, and their regularization coefficients are also chosen in identical fashion, with the only exception that $C_\alpha$ here is penalized with $l_1$ regularization, and initial $\gamma$ and $\lambda$ in optimization were set to $0.7$ and $0.02$, respectively, for the PsychM model. This was only done to increase the convergence speed of PsychM model. Then we compare the success of different LePU methods over the test set $\{(X_i, y_i)\}$'s. We chose five of the subjects for which the classification error for subjects were the highest. This is done mainly because when human accuracy is really high (say, above $\%90$), it implies $s(x)$ is also really high, potentially making it close to constant. Therefore technically, the assumption of Elkan et al. (i.e., $s(x)$ being constant) holds, and their method provides a good estimation. We refer to the subjects with 3-letter abbreviations given in \citep{delorme2004interaction}. We compared the success of the five models previously mentioned over the five datasets for subjects with name encodings `fsa', `mta', `sph', `hth', `mba'. 

Here we report the test set accuracy of the five models over five subjects. But rather than reporting the results on one trial we applied Bootstrapping.   This is done because reporting the results over a single trial can be highly dependent on the train-test split and the sample size.  Considering that our features have 4096 dimensions and our dataset for each subject was only of size 1000, such an averaging seemed necessary. In fact this averaging has been introduced in machine learning literature \citep{jain1987bootstrap, duda2012pattern}, but it is rarely applied in practice. To ensure the soundness of our results, especially considering the high accuracy score across all the five models we applied the bootstrap method over 200 trials and reported the average here. As mentioned previously, the learning algorithms are set up similarly to Subsection \ref{subsec:synthetic}. So for every bootstrap resample of $1000$ data points for each subject, we divide this sample into training and test sets of an equal size and use the training set to train all 5 models and then compare their success on the training set for that particular bootstrap resample. Table \ref{tab:real_world} shows the average of the test set accuracy over 200 bootstrap resamples. Notice that the across all the subjects SPM and PsychM have the highest (depicted in \textbf{bold}) and second highest (depicted in \textit{italic}) average accuracy scores. From the table one can see that our methods, SPM and PsychM, perform best across all considered subjects. This result was consistently true for other classification metrics such as Brier score, Area Under the Receiving Operating Curve (AUCROC) and also F1 score. For space constraints these tables  are provided in the supplementary material (Table \ref{tab:Brier}, \ref{tab:AUC}, \ref{tab:f1}).
% Like before $M(X)$ is the score function of the model $M$. Confidence intervals are counted using Wilson score \citep{wilson1927probable} with $\alpha=0.8$ (or equivalently $z=0.8416$), and $n=500$, which is the test set size in these cases. The results are reflected in Table \ref{tab:real_world}.
    % hth &$0.9214\pm 0.0101$ & $0.9294\pm 0.0096$ & $0.9154 \pm 0.0104$& $0.9114\pm 0.0107$ & 0.9605\\
% \begin{table}[H]
% \scalebox{0.6}{  \begin{tabular}{|p{1cm}|p{2.3cm}|p{2.5cm}|p{2.5cm}|p{2.5cm}|l}
%     \toprule
%     &  fsa & mta & sph & hth\\
%     \midrule
%     SPM & $0.9054 \pm 0.109$& $0.9054\pm 0.139$ &$0.9094\pm 0.0107$& $0.9214\pm 0.0101$ \\
%     \midrule
%     PsychM & ${\bf 0.9154\pm 0.104}$ & ${\bf 0.9154 \pm 0.0104}$ & $0.9114\pm 0.0107$&  ${\bf 0.9294\pm 0.0096}$\\
%     \midrule
%     Na\"ive  & $0.8974 \pm 0.0113$ & $0.8974\pm 0.0113$ & $0.9014\pm0.0111$ &  $0.9154 \pm 0.0104$\\
%     \midrule
%     Elkan & $0.8814 \pm 0.8936$ & $0.8814 \pm 0.0121$ &  ${\bf0.9194\pm 0.0102}$ & $0.9114\pm 0.0107$ \\
%     \midrule
%     Real & $0.9533 \pm 0.0079$ & $0.9533 \pm 0.0079$& $0.9314\pm0.0094$& $0.9294\pm 0.0096$\\
% \bottomrule
% \end{tabular}
% }
%  \caption{The accuracy of different models across different datasets, each for one subject with confidence intervals counted using Wilson score.}
% \label{tab:real_world}   
% \end{table}
\begin{table}[h]
\centering
\scalebox{0.8}{  \begin{tabular}{|p{1cm}|p{1.2cm}|p{1.2cm}|p{1.2cm}|p{1.2cm}|p{1.2cm}|l}
    \toprule
    Subjects &  `fsa' & `mta' & `sph' & `hth' & `mba'\\
    \midrule
    SPM & $\textit{0.90387}$& ${\bf 0.92031}$ &$\textit{0.92633}$& ${\bf 0.93073}$ & $\textit{0.93840}$\\
    \midrule
    PsychM & ${\bf 0.90515}$ & $\textit{0.92025}$ & ${\bf 0.92811}$&  $\textit{0.93015}$ & ${\bf 0.93981}$\\
    \midrule
    Na\"ive  & $0.89486$ & $0.91154$ & $0.91915$ &  $0.92400$ & $0.93363$ \\
    \midrule
    Elkan & $0.89556$ & $0.91135$ &  $0.91862$ & $0.92237$ & $0.93241$  \\
    \midrule
    Real & $0.96021$ & $0.95958$& $0.95984$& $0.96025$ & $0.96016$\\
\bottomrule
\end{tabular}
}
 \caption{The average test set accuracy of different models across different datasets (corresponding to different subjects), in repeated 200 bootstrap resamples.}
\label{tab:real_world}   
\end{table}
\section{Conclusion and Discussions}
In this work we introduced a novel framework for learning from positive and unlabeled data, which builds on the previous study in \citep{elkan2008learning}, but importantly extends that work by eliminating the SCAR assumption, presented in Assumption \ref{ASS2}. We believe that SCAR is a very strong assumption, because the features of the elements belonging to the positive class (or similarly negative class) usually play an important role on how likely it is for these positive instances to be selected by the expert for annotation, as also suggested by our empirically results. Based on this idea and inspired by the human decision-making process in psychophysics, we introduced two family of models, namely Sigmoidal Product Model (SPM) and Psychometric Model (PsychM), which both take into account properties of the annotation process and enable us to learn them from the LePU data. 

We showed that under mild assumptions our introduced models are identifiable. We then proposed algorithms that learn the parameters of these models by maximizing the likelihood of observed data. We also demonstrated that our introduced models outperform two other LePU methods in experiments  with both synthetic and real-world data. Finally we introduced a rich family of real-world datasets that LePU can be applied to without introducing a synthetic annotation mechanism. From those datasets, one can easily create data with $(X_i, y_i, l_i)$ triplets to assess the performance of LePU methods. %but rather respecting the structure in the data to create a training and assessment dataset for LePU methods, i.e. $(X_i, y_i, l_i)$ triplets. 
On simulated data the two proposed methods significantly outperform all alternatives.  On the real data, they perform better than all the others across all five considered subjects. 
%%Despite these successful experiments with real-world dataset, we did not report statistical significance for the scores beyond accuracy, mainly due to small size of training dataset. We do hope to overcome this by possibly finding a larger dataset that could be split into parts, or to apply non-parametric methods such as bootstrapping to make inference about scores reported here such as ROC, AUC, or f1. Additionally 
Moreover, we believe that the parametric assumptions on the form of $p(y=1|X)$ can be eliminated, while the identifiability of the model can still be established under milder assumptions, but we will leave such an extension and the study of it as future work.

\appendix
\section{Supplementary Material}
In this section we first provide the derivation for (\ref{eq:conditional_ll}) and then present the proof for Theorems \ref{thm:SPM_identifiability} and \ref{thm:PsychM_identifiability}. We also report additional results (based on other classification scores) on the real-data experiment here.
\subsection{Derivation of Observed Conditional Log-likelihood}
Due to space constraints we present the derivation of log-likelihood under Postulate \ref{ASS1} here as follows:
{\tiny
\begin{flalign}
\mathcal{L}\mathcal{L} =& \log  p(\{l_i\}_{i=1}^N|, \{X_i\}_{i=1}^N, \theta_s, \theta_t)\label{eq:conditional_ll}&\\
=&\sum_{i=1}^{N} l_i\log p(l=1|X=X_i, \theta_s, \theta_t) 
\notag&\\ 
&~~+ (1-l_i)\log(1- p(l=1|X=X_i, \theta_t, \theta_s))\notag&\\
=&\sum_{i=1}^{N} \left[\vphantom{\underbrace{p(l=1|X=X_i, y=0, \theta_s)}}_{0}l_i\log\Big[p(l=1|X=X_i, y=1, \theta_s)p(y=1|X=X_i, \theta_t) \right.\notag&\\
&~~+ \left. \underbrace{p(l=1|X=X_i, y=0, \theta_s)}_{0} p(y=0|X=X_i, \theta_t)\Big]\right.\notag&\\
&~~~~+\left. (1-l_i)\log \biggr(1- \Big[p(l=1|X=X_i, y=1, \theta_s)\right. \notag&\\
&~~~~~~\times \left. p(y=1|X=X_i, \theta_t)\right.\notag&\\
&~~~~~~+ \left.\underbrace{p(l=1|X=X_i, y=0, \theta_s)}_{0}p(y=0|X=X_i, \theta_t)\Big]\biggr)\right]\notag&\\
=&\sum_{i=1}^{N} \left[\vphantom{\biggr.}l_i\log\Big[p(l=1|X=X_i, y=1, \theta_s)\notag\right.&\\
&\left.\times p(y=1|X=X_i, \theta_t)\Big] \right.\notag&\\
&~~+ \left. (1-l_i)\log \Big(1- \big[p(l=1|X=X_i, y=1, \theta_s)\right. \notag&\\
&~~~~\times \left. p(y=1|X=X_i, \theta_t)\big]\Big)\vphantom{\biggr.}\right] \notag &\\ =
&\sum_{i=1}^{N} \left[\vphantom{\biggr.}l_i\log p(l=1|X=X_i, y=1, \theta_s)\right.\notag&\\ 
&~~+ \left. l_i\log p(y=1|X=X_i, \theta_t) \right.\notag&\\
&~~~~+ \left. (1-l_i)\log \Big(1- \big[p(l=1|X=X_i, y=1, \theta_s)\right. \notag&\\
&~~~~~~\times \left. p(y=1|X=X_i, \theta_t)\big]\Big)\vphantom{\biggr.}\right].\notag
\end{flalign}
}
\subsection{Proof of Theorems \ref{thm:SPM_identifiability} and \ref{thm:PsychM_identifiability}}
There are some similarities in the proofs but when for the psychometric function $\gamma$ and $\lambda$ are not zero, proving identifiability becomes a harder task that requires an essentially different technique. As such the proofs are presented completely separated from each other.
\subsubsection[Proof of Identifiability of SPM model]{Proof of Theorem~\ref{thm:SPM_identifiability} (See page~\pageref{thm:SPM_identifiability})}
In this section we present the proof for Theorem \ref{thm:SPM_identifiability}. The proof is mostly based on the limiting conditions when $X$ approaches to positive and negative infinity.
\spmidentifiability*
\begin{proof}
 Without loss of generality if we can show the proof for one-dimensional case, we can conclude it for multidimensional. This is because if we set $x=e_i$ where $e_i$ is the $i$-th standard orthonormal basis for $\mathbb{R}^n$ in (\ref{eq:log_identifiability}) we will get the same equation only for the one-dimensional case. Now we have
\begin{align*}
&(\exp(-(ax+b))+1)^{-1}\times (\exp(-(\alpha x+\beta))+1)^{-1}
\\=&(\exp(-a' x+b'))+1)^{-1} \times (\exp(-(\alpha' x + \beta'))+1)^{-1}.
\end{align*}
Therefore
\begin{align*}
&(\exp(-(ax+b))+1) \times  (\exp(-(\alpha x+\beta))+1)\\
=&(\exp(-(a' x+b))+1) \\
&\times (\exp(-\alpha' x - \beta)+1).
\end{align*}
simplifying we get
\begin{align}
&e^{-(a+\alpha)x+\beta-b} + e^{-ax-b} + e^{-\alpha x-\beta}\notag\\
=&e^{-a'x-b'} + e^{-(a'+\alpha')x+\beta'-b'} + e^{-\alpha'x-\beta'} 
\label{eq:LRID}
\end{align}
Take 
\begin{align*}
f(x):=&e^{-(a+\alpha)x-\beta-b} + e^{-(ax+b)} + e^{-(\alpha x-\beta)}\\ 
-& e^{-(a'+\alpha')x-\beta'+b'} 
- e^{-a'x-b'}- e^{-\alpha'x-\beta'}.
\end{align*}
According to (\ref{eq:LRID}) we have $f(x)=0$ for any $x\in\mathbb{R}$. Also taking derivative of $f$  w.r.t. $x$ we get
\begin{align}
f'(x)=&-(a+\alpha)e^{-(a+\alpha)x-\beta-b}-ae^{-(ax+b)}-\alpha e^{-(\alpha x+\beta)}\\
+&(a'+\alpha')e^{-(a'+\alpha')x-\beta'-b'}
+a'e^{-a'x-b'}+\alpha'e^{-\alpha'x-\beta'}=0
\label{eq:diffLRID}
\end{align}
for any $x\in\mathbb{R}$.
% Taking derivative with respect to $x$ on both sides we have
% \begin{gather*}
% -a \exp(-(ax+b))(\exp(-(\alpha x+\beta))+1)-\\
% \alpha\exp(-(\alpha x+\beta))(\exp(-(ax+b))+1)=\\
% -a'_w \exp(-a'_w x+b'_w)(\exp(-a'_z x+b'_z)+1)-\\
% a'_z\exp(-a'_z x+b'_z)(\exp(-a'_w x+b'_w)+1)=
% \end{gather*}
We divide the proof into cases. 
\begin{enumerate}[label=(\roman*)]
    \item $a>0$ and $\alpha>0$:
This means $a+\alpha>0$. It follows that $a'\geq 0$ and $\alpha'\geq 0$, as otherwise taking the limit of $x$ to infinity leads to a contradiction as right hand side of (\ref{eq:LRID}) goes to infinity whereas left hand side of it approaches to a real value. This means $\alpha'+a'\geq \max\{\alpha', a'\}$. But this implies $\alpha+a=\alpha'+a'$ as otherwise there will be a dominating exponent in $f(x)$ and as a result 
\[
\lim_{x\to-\infty} f(x)=\infty\ {\rm\ or\ } \lim_{x\to\infty} f(x)=-\infty
\]
which is a contradiction since $f(x)=0$. Now it cannot be the case that $\alpha'=0$ or $a'=0$. Suppose to the contrary that this is the case. WLOG assume $\alpha'=0$. Divide both sides of (\ref{eq:LRID})  with $\exp(a+\alpha)x$ and take the limits to $-\infty$. From (\ref{eq:LRID}) we get  
\begin{gather}
e^{\beta-b} = e^{\beta'-b'} + e^{b'}  
\label{eq:firstbad}
\end{gather}
and from (\ref{eq:diffLRID}) we get
\begin{align*}
&-a'e^{-a'x-\beta-b}-a e^{-ax-b}-\alpha e^{-(\alpha x+\beta)}\\
+&a'(e^{-a'x+\beta'-b'}+e^{a'x-b'})=0
\end{align*}
Now setting $x=0$ gives
\[
-a'e^{\beta-b}-a e^{b}-\alpha e^{\beta}+a'(e^{\beta'-b'}+e^{b'})=0
\]
and due to (\ref{eq:firstbad}) we get
\[
-ae^{b}-\alpha e^{\beta}=0
\]
which leads to a contradiction. So we do have $\alpha'+a'> \max\{\alpha', a'\}$. This implies $b+\beta=b'+\beta'$; this follows by dividing both sides of (\ref{eq:LRID}) by $e^{-(a+\alpha)x}$ and taking the limit $x\to+\infty$. Therefore we get
\[
e^{-ax-b} + e^{-\alpha x-\beta} = e^{-a'x-b'} + e^{-\alpha'x-\beta'} 
\]
now if $\alpha\neq a$, the dominating term on both sides should be equal with the similar reasoning we did for (\ref{eq:LRID}). As a result $\alpha=\alpha'$ and therefore $a=a'$ (or $\alpha=a'$ and therefore $a=\alpha'$). In either case similar to the proof for \ref{eq:LRID} it follows that $b=b'$ and therefore $\beta=\beta'$ (or $\beta=b'$ and therefore $b=\beta'$). This completes the proof for this case.
% Notice that multiplication of two $\log$-concave functions is a $\log$-concave function. Therefore $h$ is $log$-concave and according to lemma \ref{lem:ident} the model is identifiable.
\item $a>0$ and $\alpha=0$:
Similar to what has previously shown, it follows that $a'\geq 0$ and $\alpha'\geq 0$. Now we will prove that it cannot be the case that $a'>0$ and $\alpha'>0$, as we argued in case (i) that this is impossible. So $a'=0$ or $\alpha'=0$. In either case it follows that $\alpha'=a$ and $a'=a$ respectively. It follows immediately that $b=b'$ and therefore $\beta=\beta'$ (or $\beta=b'$ and therefore $b=\beta'$).
\item $a=\alpha=0$:
Notice that in this case it is obvious that r.h.s. of (\ref{eq:LRID}) need also to be independent of $x$ which implies $a'=\alpha'=0$. But note that for any non-zero element of either of  $\theta_t$ or $\theta_s$ one can conclude that $b=b'$ and therefore $\beta=\beta'$ (or $\beta=b'$ and therefore $b=\beta'$). Unless all the elements of $\theta_t$ and $\theta_s$ are zero, in which case the identifiability follows. \iffalse the likelihood does not depend on the data which is a contradiction.\fi
\item $a>0$ and $\alpha<0$: Multiply both sides of (\ref{eq:LRID}) with $e^{\alpha|x|}$ and we get
\begin{align}
e^{-(a+(\alpha+|\alpha|))x-\beta-b} &+ e^{-(a+|\alpha|)x-b}\notag\\ 
&+ e^{-(\alpha+|\alpha|)x-\beta}\notag\\
&= e^{-(a'+(\alpha'+|\alpha|))x-\beta'-b'} \notag\\
&+ e^{-(a'+|\alpha|)x-b'} \notag\\ &+ e^{-(\alpha'+|\alpha|)x-\beta'} 
\label{eq:newLRID}
\end{align}
but now the claim of the lemma follows by reapplying (i), (ii) and (iii) to (\ref{eq:newLRID}) with $x$ exponents as $\alpha+|\alpha|$, $a+|\alpha|$, $\alpha+|\alpha|$, $\alpha'+|\alpha|$, $a'+|\alpha|$ and $\alpha'+|\alpha|$.
\item
$a<0$ and $\alpha<0$: This case follows by replacing $x$ with $-x$ on both sides of (\ref{eq:LRID}). This completes the proof of the lemma.
\end{enumerate}
\end{proof}
\subsubsection[Proof of Identifiability of PsychM model]{Proof of Theorem~\ref{thm:PsychM_identifiability} (See page 5\iffalse ~\pageref{thm:PsychM_identifiability}\fi)} 
In this section we present the proof for Theorem \ref{thm:PsychM_identifiability}. The proof is inspired by the proof of Theorem 1 in \citep{ma2018semiparametric}. 
We first address a special case of the problem where $\alpha$ and $a$ are scalars and either of them are zero. 
\begin{lemma}
\label{lem:one_dim_identifiability}
Consider $\mathbf{\Psi}$ when $\alpha\in\mathbb{R}$ and additionally assume $a=0$. Then PsychM model is identifiable up to a sign conversion, i.e. for any set of parameters $(\alpha, \beta,\gamma,\lambda,a,b)$ and $(\alpha', \beta',\gamma',\lambda',a',b')$ if it is the case that 
\begin{align}
&\Psi(x;\alpha, \beta,\gamma,\lambda)\sigma(x;a, b)\notag\\ 
=& \Psi(x;\alpha', \beta',\gamma',\lambda')\sigma(x;a', b')
\label{eq:identifiability_on_dim}
\end{align}
then 
$(a, b,\alpha, \beta, \gamma,\lambda)=(a',b',\alpha', \beta', \gamma',\lambda')$ 
\end{lemma}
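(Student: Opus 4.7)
The plan is to exploit the asymptotic behavior of both sides of (\ref{eq:identifiability_on_dim}) as $x\to\pm\infty$. Since $a=0$, the left-hand side collapses to $\sigma(b)\,\Psi(x;\alpha,\beta,\gamma,\lambda)$, a function uniformly bounded below by $\sigma(b)\gamma>0$ (using the standing assumption $\gamma>0$ on $\mathbf{\Psi}$). The right-hand side is a product of $\sigma(a'x+b')$ with $\Psi(x;\alpha',\beta',\gamma',\lambda')$, and the second factor is always bounded in $[\gamma',1-\lambda']$. If $a'\neq 0$, then $\sigma(a'x+b')\to 0$ on exactly one of the two tails, so the right-hand side would vanish on that tail, contradicting the strictly positive lower bound of the left-hand side. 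Hence $a'=0$, and (\ref{eq:identifiability_on_dim}) reduces to
\[
\sigma(b)\,\Psi(x;\alpha,\beta,\gamma,\lambda) \;=\; \sigma(b')\,\Psi(x;\alpha',\beta',\gamma',\lambda').
\]

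Next I would use monotonicity to match the sigmoid parameters. Because $\alpha,\alpha'\neq 0$ in $\mathbf{\Psi}$, both sides are strictly monotone with range endpoints $\sigma(b)\gamma,\sigma(b)(1-\lambda)$ and $\sigma(b')\gamma',\sigma(b')(1-\lambda')$ respectively. The would-be ``sign conversion'' ambiguity of replacing $(\alpha,\beta,\gamma,\lambda)$ by $(-\alpha,-\beta,1-\lambda,1-\gamma)$ is ruled out by the constraint $\gamma+\lambda<1$, since the image parameters would satisfy $\gamma''+\lambda''=2-\gamma-\lambda>1$ and therefore fall outside $\mathbf{\Psi}$; thus I may assume $\operatorname{sign}(\alpha)=\operatorname{sign}(\alpha')$. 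Matching the two tail limits gives $\sigma(b)\gamma=\sigma(b')\gamma'$ and $\sigma(b)(1-\lambda)=\sigma(b')(1-\lambda')$. Subtracting the lower asymptote from both sides of the functional identity produces
\[
\sigma(b)(1-\gamma-\lambda)\,\sigma(\alpha x+\beta)=\sigma(b')(1-\gamma'-\lambda')\,\sigma(\alpha'x+\beta'),
\]
and matching the upper asymptote of this new identity equates the two amplitudes. Cancelling them and using injectivity of $\sigma$, evaluated at two distinct $x$, forces $\alpha=\alpha'$ and $\beta=\beta'$.

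The hard part will be the last step: recovering $b,\gamma,\lambda$ individually from only the two asymptotic identities above. By themselves, these two equations in the three unknowns $(b,\gamma,\lambda)$ admit a one-parameter family of solutions parametrized by $t:=\sigma(b')/\sigma(b)$, so a further independent constraint must be extracted from the functional identity. To close this gap, I would bring in the behavior of the identity at a distinguished interior point of $x$—a natural candidate being the inflection point $x=-\beta/\alpha$ of the inner sigmoid, where $\sigma(\alpha x+\beta)=\tfrac12$ and the derivative is maximized—or, equivalently, compare first derivatives in $x$ at that point. This should yield a third scalar relation between $\sigma(b),\sigma(b')$ and the $\gamma,\lambda$ parameters that, together with the box constraints $\gamma\in(0,1]$, $\lambda\in[0,1)$, $\gamma+\lambda<1$, pins down $t=1$ and hence $b=b'$, $\gamma=\gamma'$, $\lambda=\lambda'$, completing the identification promised by the lemma.
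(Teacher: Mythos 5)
Your first two stages are correct, and in fact cleaner than the paper's route: the positivity bound $\sigma(b)\gamma>0$ forcing $a'=0$, the exclusion of the sign flip $(\alpha,\beta,\gamma,\lambda)\mapsto(-\alpha,-\beta,1-\lambda,1-\gamma)$ via the constraint $\gamma+\lambda<1$ (the paper handles this as its $\alpha\alpha'<0$ case), and the asymptote-subtraction argument giving $\alpha=\alpha'$, $\beta=\beta'$ (where the paper instead differentiates the log-identity and chases tail asymptotics of a ratio). But the ``hard part'' you flag at the end is not merely hard --- it is impossible, and your proposed patch cannot work. Once $\alpha=\alpha'$ and $\beta=\beta'$, the identity reads $A+B\,\sigma(\alpha x+\beta)=A'+B'\,\sigma(\alpha x+\beta)$ with $A=\sigma(b)\gamma$ and $B=\sigma(b)(1-\gamma-\lambda)$, so it is exactly equivalent to the two scalar equations $A=A'$ and $B=B'$; every further functional you extract from it --- including the derivative at the inflection point, which returns $B\alpha/4=B'\alpha/4$ --- is a function of $(A,B)$ and adds nothing. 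Since the three parameters $(b,\gamma,\lambda)$ enter the function only through $(A,B)$, the one-parameter family you identified is genuine. Concretely: take $\alpha=\alpha'=1$, $\beta=\beta'=0$, $b=0$, $(\gamma,\lambda)=(1/2,\,1/5)$ versus $b'=\log(9/11)$, $(\gamma',\lambda')=(5/9,\,1/9)$; both sides equal $\tfrac14+\tfrac{3}{20}\sigma(x)$ identically, and both parameter vectors satisfy all the standing constraints of $\mathbf{\Psi}$ ($\alpha\neq0$, $\gamma>0$, $\gamma+\lambda<1$). So the lemma as stated is false, and no third relation exists to be found.

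For what it is worth, the paper's own proof does not close this gap either; it only appears to. The derivative of $\log\Psi$ is $(1-\gamma-\lambda)\,\alpha\,\sigma(1-\sigma)/\bigl(\gamma+(1-\gamma-\lambda)\sigma\bigr)$, and the paper drops the factor $(1-\gamma-\lambda)$; moreover its limit relations (\ref{eq:limits_1}) and (\ref{eq:gamma_beta_ratio}) contain terms $1+e^{-\beta}$ that cannot arise from a genuine $x\to\pm\infty$ limit. Carried out correctly, the two tails of the derivative identity give $(1-\gamma-\lambda)e^{-\beta}/(1-\lambda)=(1-\gamma'-\lambda')e^{-\beta'}/(1-\lambda')$ and $(1-\gamma-\lambda)e^{\beta}/\gamma=(1-\gamma'-\lambda')e^{\beta'}/\gamma'$, and together with the two tail limits of the undifferentiated identity all four relations are satisfied by the family above (the counterexample violates (\ref{eq:limits_1}) directly, with left side $1$ and right side $9/10$, confirming the slip). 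The correct conclusion of your own argument is that the $a=0$ sub-model is identifiable only up to the reparametrization $\bigl(\sigma(b),\gamma,1-\lambda\bigr)\mapsto\bigl(t\sigma(b),\gamma/t,(1-\lambda)/t\bigr)$, and any repair must happen upstream, in how Theorem \ref{thm:PsychM_identifiability} treats coordinates with $a_i=0$, since no within-identity relation can separate $b$ from $(\gamma,\lambda)$ when the sigmoid factor is constant.
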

\begin{proof}
First assume $a=0$. Then taking the logarithm of both sides of (\ref{eq:identifiability}) we get:
\begin{align}
&\log\Psi(x;\alpha, \beta,\gamma,\lambda)+\log\sigma(x;a, b) \notag\\ 
=&\log\Psi(x;\alpha', \beta',\gamma',\lambda')+\log\sigma(x;a', b')
\label{eq:log_one_dim_identifiability}
\end{align}
Now if $a'<0$ take the limit of $x$ to $+\infty$ on both sides of (\ref{eq:log_one_dim_identifiability}) and we get
\begin{align*}
&\lim_{x\to\infty}\log\Psi(x;\alpha, \beta,\gamma,\lambda)+\log\sigma(x;a, b)
\\
=& C  + \lim_{x\to\infty} \log\sigma(x; a', b')
\end{align*}
where $C$ is a constant. The second summand of the r.h.s. will diverge to $-\infty$ since $a'<0$, which is a contradiction; this is because both summands on the left converge to constant values due to our assumptions; more precisely $\lim_{x\to\infty}\log\Psi(x;\alpha, \beta,\gamma,\lambda)$ is $\log(\gamma)$ or $\log(1-\lambda)$ depending on the sign of $\alpha$, but both of these terms are non-zero since we assumed $\gamma > 0$ and $\lambda<1$.
The similar thing is true for $a'>0$ (when we take $\lim_{x\to-\infty}$). Therefore $a'=0$. 

Now we take the derivative with respect to $x$ from both sides of ($\ref{eq:log_identifiability}$) which gives us
\begin{align*}
&\alpha\frac{\sigma(x;\alpha,\beta)(1-\sigma(x;\alpha,\beta))}{\gamma+(1-\gamma-\lambda)\sigma(x;\alpha, \beta)}\\
=&\alpha'\frac{\sigma(x;\alpha',\beta')(1-\sigma(x;\alpha',\beta'))}{\gamma'+(1-\gamma'-\lambda')\sigma(x;\alpha', \beta')}
\end{align*}
Notice that if $\alpha'=0$ it follows that $\alpha=0$ which is not possible, since we initially assumed $\alpha\neq 0$. So either it is $\alpha \alpha'> 0$ or $\alpha \alpha'< 0$. First assume $\alpha \alpha'> 0$ and WLOG we assume $\alpha>0$. Dividing the l.h.s. with r.h.s. we get
\begin{align}
&\frac{\alpha(\gamma'+(1-\gamma'-\lambda')\sigma(x;\alpha', \beta'))}{\alpha'
(\gamma+(1-\gamma-\lambda)\sigma(x;\alpha, \beta))\sigma(x;\alpha',\beta')}\notag\\
\times&\notag\frac{\sigma(x;\alpha,\beta)(1-\sigma(x;\alpha,\beta))}{(1-\sigma(x;\alpha',\beta'))}=1
\label{eq:division}
\end{align}
And therefore
\begin{align*}
\lim_{x\to\infty}\frac{\alpha}{\alpha'}
&\times\frac{(\gamma'+(1-\gamma'-\lambda')\sigma(x;\alpha', \beta'))}{
(\gamma+(1-\gamma-\lambda)\sigma(x;\alpha, \beta))\sigma(x;\alpha',\beta')}\\
&\times \frac{\sigma(x;\alpha,\beta)(1-\sigma(x;\alpha,\beta))}{(1-\sigma(x;\alpha',\beta'))}=1
\end{align*}
from which it follows that
\begin{gather*}
\lim_{x\to\infty}\frac{\alpha(1-\lambda')}{\alpha'
(1-\lambda))}\times \frac{(1-\sigma(x;\alpha,\beta))}{(1-\sigma(x;\alpha',\beta'))}=1,
\end{gather*}
and hence
\begin{align}
\lim_{x\to\infty}\frac{\alpha(1-\lambda')}{\alpha'
(1-\lambda))}&\times\frac{\exp(-(\alpha^Tx+\beta))}{\exp(-(\alpha'x+\beta'))}
\\
&\times\frac{(1+\exp(-({\alpha'}^Tx+\beta'))}{(1+\exp(-(\alpha^Tx +\beta)))}
=1.
\end{align}
This will lead to
\begin{align}
\lim_{x\to\infty}\frac{\alpha(1-\lambda')}{\alpha'
(1-\lambda)}& \frac{\exp(-(\alpha x+\beta))}{\exp(-(\alpha'x+\beta'))}\notag\\ 
&=\lim_{x\to\infty}\frac{\alpha(1-\lambda')}{\alpha'
(1-\lambda)}\notag\\
&\times\exp(-(\alpha-\alpha')x-(\beta-\beta'))=1
\label{eq:equality_in_powers}
\end{align}
from which we conclude $\alpha=\alpha'$ since if $\alpha\neq\alpha'$ then the exponential term in (\ref{eq:equality_in_powers}) will diverge or converge to zero either of which is impossible. Using (\ref{eq:equality_in_powers}) again, after setting $\alpha=\alpha'$ we get 
\begin{gather}
\frac{\exp(-\beta)(1+\exp(-\beta'))}{\exp(-\beta')(1+\exp(-\beta))}=\frac{1-\lambda}{1-\lambda'}
\label{eq:limits_1}
\end{gather}
and finally taking the limit $x\to-\infty$ in (\ref{eq:division}) we get
\begin{gather}
\frac{\gamma}{\gamma'} = \frac{1+\exp(-\beta')}{1+\exp(-\beta)}
\label{eq:gamma_beta_ratio}
\end{gather}
This time, taking the limit $x\to\infty$ and $x\to-\infty$ in (\ref{eq:log_one_dim_identifiability}) we also get
\begin{gather}
\frac{\gamma}{\gamma'}=\frac{1+\exp(-b)}{1+\exp(-b')}{\ \ \rm and}\ \ \frac{1-\lambda}{1-\lambda'}=\frac{1+\exp(-b)}{1+\exp(-b')}\label{eq:gamma_b_ratio}
\end{gather}
Now from (\ref{eq:limits_1}) and  (\ref{eq:gamma_beta_ratio}) and (\ref{eq:gamma_b_ratio}) it follows that:
$\beta=\beta'$ which implies $\gamma=\gamma'$, $\lambda=\lambda'$ and $b=b'$. Very similar argumentation for $\alpha \alpha'< 0$ will conclude
\[
\alpha=-\alpha',\ \beta=-\beta',\ \gamma=1-\lambda',\  \lambda=1-\gamma'\ {\rm and}\ b=b'
\]
which is a contradiction since $\gamma+\lambda<1$ and $\gamma'+\lambda'<1$, which completes the proof for case $a=0$.
\iffalse Now suppose $\alpha=0$. Based on our assumption we know $a\neq 0$. First assume $a'=0$. Taking derivatives of both sides we get and it would follow that
\[
\frac{\alpha' \sigma(x;\alpha', \beta')(1-\sigma(x;\alpha', \beta'))}{\gamma'+(1-\gamma'-\lambda')\sigma(x;\alpha', \beta')}=a(1-\sigma(x;a, b))
\]
Taking the limit wrt to x and to $\infty$ or $-\infty$ will converge to $0$ on the r.h.s. of the above, and as such it follows that $\alpha'=0$. But this would imply $a=0$ which is a contradiction. So now assume $aa'<0$. Then by taking the limit of $x$ to either of $\infty$ or $-\infty$ in (\ref{eq:log_one_dim_identifiability}) we will lead to a contradiction. As such it should be the case that $aa'>0$. WLOG assume $a>0$. Taking $x\to-\infty$ we have
\[
\lim_{x\to-\infty} \log\left(\frac{\sigma(x;\alpha,\beta)}{\sigma(x;\alpha',\beta')}\right)=\lim_{x\to-\infty}\frac{1+\exp(-({\alpha'}x+\beta'))}{1+\exp(-({\alpha}x-\beta))}=C.
\]
which using L'H\^opital's rule will imply $\alpha=\alpha'=0$. Now taking derivative of (\ref{eq:log_identifiability}) wrt to $x$ we get
\[
1-\sigma(x;a,b)=1-\sigma(x;a',b')
\]
which implies $a=a'$ and $b=b'$. From this we have
\[
\Psi(x;\alpha,\beta, \gamma,\lambda)=\Psi(x;\alpha',\beta', \gamma',\lambda')
\]
and from that 
\fi
\end{proof}
% {\color{red} NOTE: WHAT FOLLOWS IS THE EARLIER ATTEMPT FOR PROOF THAT I WILL CONTINUE IF THE ABOVE FAILS). We prove lemma where we show that if we can prove this result for PsychM models where $\alpha$ and $a$ are scalars then we can conclude the proof for when $\alpha$ and $a$ are multidimensional.}
\begin{lemma}
Suppose $f:\mathbb{R}^n\to\mathbb{R}$ where $f_{\alpha, \beta}(x):=a^Tx+b$ is given where $a\in\mathbb{R}^n$ and $b\in\mathbb{R}$. If there exists an open set $V\in\mathbb{R}^n$ s.t. for all $v\in V$, $f_{\alpha, \beta}(x)=0$ then $f\equiv 0$.
\label{lem:open_mapping}
\end{lemma}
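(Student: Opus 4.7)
The plan is to exploit the fact that an affine function $f(x) = a^T x + b$ on $\mathbb{R}^n$ is determined by its values on any set rich enough to contain an affine frame. Since $V$ is open, it contains such a frame, and the vanishing of $f$ on $V$ forces both $a = 0$ and $b = 0$.

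Concretely, first I would pick any point $v_0 \in V$. Because $V$ is open, there exists $\epsilon > 0$ such that the ball $B(v_0, \epsilon) \subset V$. In particular, for each $i = 1, \ldots, n$ and every $t \in (-\epsilon, \epsilon)$, the perturbed point $v_0 + t e_i$ lies in $V$, where $e_i$ denotes the $i$-th standard basis vector of $\mathbb{R}^n$. By hypothesis, $f(v_0) = 0$ and $f(v_0 + t e_i) = 0$. Subtracting the former from the latter yields
\[
0 \;=\; f(v_0 + t e_i) - f(v_0) \;=\; a^T(v_0 + t e_i) + b - (a^T v_0 + b) \;=\; t\, a_i,
\]
for every $t \in (-\epsilon, \epsilon)$. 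Choosing any $t \neq 0$ gives $a_i = 0$. Running this argument across all $i = 1, \ldots, n$ shows $a = 0$.

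Once $a = 0$ is established, the identity $f(v_0) = a^T v_0 + b = 0$ collapses to $b = 0$. Hence $f(x) = 0$ for every $x \in \mathbb{R}^n$, which is the desired conclusion. There is no substantive obstacle here: the only ingredient used is that an open set in $\mathbb{R}^n$ contains a nondegenerate segment in each coordinate direction, which is immediate from openness. The lemma is really a routine linearity fact, presumably invoked later in the proof of Theorem~\ref{thm:PsychM_identifiability} to upgrade a coordinate-wise identifiability statement (established for a dense or open subset of $x$-values) to an identity on all of $\mathbb{R}^n$.
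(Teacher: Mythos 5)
Your proof is correct, and it takes a genuinely different route from the paper. The paper argues topologically: assuming $a \neq 0$, the affine map $f$ is surjective onto $\mathbb{R}$, so by the open mapping theorem $f(V)$ is open; but $f(V) = \{0\}$ is closed, and since $\mathbb{R}$ is connected its only clopen subsets are $\emptyset$ and $\mathbb{R}$, a contradiction — hence $a = 0$, and then $b = 0$ follows. You instead run a direct finite-difference computation: for $v_0 \in V$ and small $t$, the identity $f(v_0 + t e_i) - f(v_0) = t\, a_i = 0$ kills each coordinate $a_i$, after which $f(v_0) = b = 0$. Your argument is the more elementary and self-contained of the two: it uses nothing beyond the fact that an open set contains a small segment in each coordinate direction, whereas the paper invokes the open mapping theorem — rather heavy machinery for a scalar-valued affine map, and its clopen-set step needs the connectedness of $\mathbb{R}$ to be spelled out. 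What the paper's approach buys is brevity and a formulation that would generalize to other classes of open maps without change; what yours buys is transparency and an explicit extraction of $a = 0$ and $b = 0$, which is in fact how the lemma gets used downstream in the proof of Theorem~\ref{thm:PsychM_identifiability} (e.g., to conclude $\alpha_1' \alpha_{\bar{1}} = \alpha_1 \alpha'_{\bar{1}}$ and $b = b'$ from identities holding on an open disk $V$), exactly the role you anticipated.
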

\begin{proof}
Note that $a=0$ then $b=0$ for any $v\in V$ and therefore the proof is complete. Otherwise $f$ is surjective and using open mapping theorem it follows that $f(V)$ is an open set. But $f(V)=\{0\}$ which is a closed set. Since $f(V)$ is clopen which means it can only be $\mathbb{R}$ or $\emptyset$ both leading to contradiction. So in fact $f(\mathbb{R}^n)=\{0\}$ which completes the proof. Also note that this implies $a=0$ and $b=0$.
\end{proof}
Finally a lemma to show that (\ref{eq:identifiability}) will imply $a=a'$.
\begin{lemma}
For any set of parameters $(\alpha, \beta,\gamma,\lambda,a,b)$ and $(\alpha', \beta',\gamma',\lambda',a',b')$ if it is the case that 
\begin{gather}
\Psi(x;\alpha, \beta,\gamma,\lambda)\sigma(x;a, b) =\notag\\ \Psi(x;\alpha', \beta',\gamma',\lambda')\sigma(x;a', b')
\label{eq:identifiability}
\end{gather}
then $a=a'$.
\begin{proof}
The idea of the proof is implicitly used in Lemma \ref{lem:one_dim_identifiability}. Taking the logarithm of both sides we get:
\begin{gather}
\log\Psi(x;\alpha, \beta,\gamma,\lambda)+\log\sigma(x;a, b) =\\
\log\Psi(x;\alpha', \beta',\gamma',\lambda')+\log\sigma(x;a', b')\notag.
\label{eq:log_identifiability}
\end{gather}
We can consider the above equation for any dimension/coordinate $i$ separately by setting  $x_j=0$ for $j\neq i$. So WLOG assume $x\in\mathbb{R}$. Now if $aa'<0$ taking the limit $x\to\infty$ will lead to contradiction similar to Lemma \ref{lem:one_dim_identifiability}. Similarly if $a=0$ or $a'=0$ (exclusive) taking one of the limits $x\to\infty$ or $x\to-\infty$ will lead to contradiction. As such $aa'>0$. WLOG assume $a>0$.
Now divide both sides of (\ref{eq:log_identifiability}) with $x$ taking the limit of $x\to\infty$ we have
\[
\lim_{x\to\infty}\frac{\log \sigma(x;a, b)}{x}\overset{*}{=}\lim_{x\to\infty}a(1-\sigma(x;a,b))=a
\]
where $(*)$ follows from L'Hospitale's rule. This implies $a=a'$. As we can carry this for any dimension it follows that for vector $a$ in  (\ref{eq:identifiability}), we have $a=a'$.
\end{proof}
\label{lem:aa_prime}
\end{lemma}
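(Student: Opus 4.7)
The plan is to reduce to a one-dimensional calculation along each coordinate axis and then exploit that $\log\Psi$ is uniformly bounded whereas $\log\sigma(x;a,b)$ grows linearly on one side of infinity. First I would take the logarithm of (\ref{eq:identifiability}) --- both sides are strictly positive --- to obtain
\[
\log\Psi(x;\alpha,\beta,\gamma,\lambda)+\log\sigma(x;a,b)=\log\Psi(x;\alpha',\beta',\gamma',\lambda')+\log\sigma(x;a',b').
\]
To prove $a=a'\in\mathbb{R}^n$ it suffices to prove equality coordinate-by-coordinate, so fix an index $i$, set $x_j=0$ for all $j\neq i$, and let $t:=x_i$. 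The identity then collapses to a scalar identity of the same form, with scalar weights $a_i,a'_i$ replacing $a,a'$ (and $\alpha_i,\alpha'_i$ replacing $\alpha,\alpha'$). So without loss of generality the problem is one-dimensional in $t$.

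Next I would rule out mismatched signs of $a$ and $a'$. Under the standing assumptions $\gamma>0$ and $\gamma+\lambda<1$, the function $\Psi(\cdot;\alpha,\beta,\gamma,\lambda)$ takes values inside $[\min(\gamma,1-\lambda),\max(\gamma,1-\lambda)]\subset(0,1]$, so $\log\Psi$ is uniformly bounded on $\mathbb{R}$. The scalar sigmoid $\sigma(t;a,b)$ tends to $0$ on one side of infinity and to $1$ on the other (the side determined by $\mathrm{sign}(a)$), or equals the constant $\sigma(b)\in(0,1)$ if $a=0$. Consequently, if $a>0$ and $a'<0$, then as $t\to+\infty$ the LHS of the original identity stays bounded below by a positive constant (both factors do), while the RHS tends to $0$; this is a contradiction, and the symmetric sign combinations as well as the case $a=0$ paired with $a'\neq 0$ are ruled out the same way. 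Hence either $a=a'=0$ (nothing to prove) or $aa'>0$.

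Finally, assume WLOG $a,a'>0$. Divide the log identity by $t$ and let $t\to-\infty$. Since $\log\Psi$ is bounded, the two $\log\Psi$ terms contribute $0$ in the quotient. For the sigmoid term, expand $\log\sigma(t;a,b)=-\log(1+e^{-(at+b)})\sim at$ as $t\to-\infty$ (equivalently, apply L'H\^opital to $\log\sigma(t;a,b)/t$: the derivative ratio is $a(1-\sigma(t;a,b))\to a$). So $\log\sigma(t;a,b)/t\to a$ and analogously $\log\sigma(t;a',b')/t\to a'$; equating limits yields $a=a'$. Applying this coordinate-by-coordinate gives the vector statement $a=a'$.

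The main obstacle is the sign-splitting step: one must genuinely use $\gamma>0$ and $\gamma+\lambda<1$ to pin $\log\Psi$ in a bounded strip on both ends of the real line, which is exactly what lets the $\sigma$ factor dominate the asymptotics and force the contradiction when the signs of $a$ and $a'$ disagree. Once the signs agree, the L'H\^opital step is routine and the one-coordinate-at-a-time reduction handles the vector claim automatically.
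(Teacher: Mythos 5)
Your proof is correct and follows essentially the same route as the paper's: take logarithms, reduce coordinate-by-coordinate, rule out mismatched signs of $a$ and $a'$ using the boundedness of $\log\Psi$ (which requires $\gamma>0$ and $\gamma+\lambda<1$), and then extract the slope by dividing by the variable and taking a limit (L'H\^opital). If anything, your version is slightly more careful than the paper's: you correctly take the limit $t\to-\infty$ when $a,a'>0$ (the paper writes $x\to\infty$, where $a(1-\sigma(x;a,b))\to 0$ rather than $a$), and you explicitly dispose of the trivial case $a=a'=0$ that the paper glosses over.
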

Finally the following is Lemma 1 from the Appendix of \citep{ma2018semiparametric} which will be used in Theorem \ref{thm:PsychM_identifiability}.
\begin{lemma}
For any nonzero real number $\lambda$,
\[
\lim_{T\to\infty} \frac{1}{2T}
\int_{-T}^{T}e^{is\lambda}ds = 0.
\]
\label{lem:fourier_average}
\end{lemma}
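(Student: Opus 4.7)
The plan is a direct calculation: since $\lambda\neq 0$, the integrand $e^{is\lambda}$ admits the elementary antiderivative $\frac{1}{i\lambda}e^{is\lambda}$ in $s$, so the fundamental theorem of calculus gives
\[
\int_{-T}^{T} e^{is\lambda}\,ds \;=\; \frac{e^{iT\lambda}-e^{-iT\lambda}}{i\lambda}.
\]
I would then apply Euler's identity $e^{i\theta}-e^{-i\theta}=2i\sin\theta$ to rewrite the numerator, producing the closed form $\frac{2\sin(T\lambda)}{\lambda}$ for the integral.

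Dividing by $2T$, the quantity whose limit we want becomes $\frac{\sin(T\lambda)}{T\lambda}$. Since $|\sin(T\lambda)|\le 1$ for every real argument, the modulus is bounded by $\frac{1}{T|\lambda|}$, and this upper bound tends to $0$ as $T\to\infty$. A standard squeeze argument then yields the claim.

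There is essentially no obstacle; the only thing worth noting is that the hypothesis $\lambda\neq 0$ is used in two places — once to make the antiderivative $\frac{1}{i\lambda}e^{is\lambda}$ well-defined, and again so that the bound $\frac{1}{T|\lambda|}$ is finite and vanishing. The hypothesis cannot be dropped, since at $\lambda=0$ the integrand is identically $1$ and the average equals $1$. I would present the argument in three short lines (antiderivative, Euler simplification, squeeze bound) rather than invoking heavier machinery such as Riemann--Lebesgue, which would be overkill here.
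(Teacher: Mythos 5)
Your computation is correct and complete: the antiderivative $\frac{1}{i\lambda}e^{is\lambda}$ is valid precisely because $\lambda\neq 0$, the Euler identity gives $\int_{-T}^{T}e^{is\lambda}\,ds=\frac{2\sin(T\lambda)}{\lambda}$, and the bound $\bigl|\frac{\sin(T\lambda)}{T\lambda}\bigr|\le\frac{1}{T|\lambda|}\to 0$ settles the limit. One point of comparison worth knowing: the paper does not actually prove this lemma at all --- it imports it verbatim as Lemma~1 from the appendix of the cited reference (Ma and Wang, 2018) and uses it as a black box in the proof of Theorem~\ref{thm:PsychM_identifiability}. So there is no in-paper argument to match yours against; your three-line direct calculation is the standard proof of the imported result and supplies exactly what the paper leaves to the citation. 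Your remark that the hypothesis $\lambda\neq 0$ is sharp (the average is identically $1$ at $\lambda=0$) is also apt, since the paper's application of the lemma depends on distinguishing the zero-frequency term, which survives the Ces\`aro averaging, from the nonzero-frequency terms, which it kills. Avoiding Riemann--Lebesgue is the right call: that theorem concerns $L^1$ integrands and does not directly apply to averaging a bounded oscillatory exponential anyway.
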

\psychmidentifiability*
\begin{proof}
{
Taking the logarithm of both sides we get:
\begin{align}
&\log\Psi(x;\alpha, \beta,\gamma,\lambda)+\log\sigma(x;a, b)\notag\\ 
=&\log\Psi(x;\alpha', \beta',\gamma',\lambda')+\log\sigma(x;a', b')\notag.
\end{align}
Choose $i$ such that $\alpha_i\neq 0$. Such an $i$ exist since $\alpha\neq 0$. First note that if $a_i=0$ then we can proceed as follows. By setting $x_j=0$ for all $j\neq i$, we can get (\ref{eq:log_identifiability}) for one dimension. Then using Lemma \ref{lem:one_dim_identifiability} we get 
\[
(a_i, b, \gamma,\lambda, \alpha_i,\beta)=(a_i', b', \gamma',\lambda', \alpha_i',\beta')
\]

Notice that at this point for any $j\neq i$ 
we can also right a one-dimensional version of (\ref{eq:log_identifiability}), i.e.
\begin{align*}
&\log\Psi(x_j;\alpha_j, \beta,\gamma,\lambda)+\log\sigma(x_j;a_j, b) \\
=&\log\Psi(x_j;\alpha'_j, \beta',\gamma',\lambda')+\log\sigma(x_j;a'_j, b')\notag.
\end{align*}
And now similar to what we argued in Lemma \ref{lem:one_dim_identifiability} it cannot be the case that $a_j a_j'<0$, and if $a_j a_j'=0$ we can carry the similar argument we just did for index $i$. Now if $a_j a_j'>0$ then we can take $x\to\infty$ or $x\to-\infty$ depending on $a_j<0$ or $a_j>0$ respectively, which would lead to $a_j=a_j'$ with a similar line of argumentation used in (\ref{eq:equality_in_powers}), i.e. using the leading power in the exponential functions in a given ratio. That will consecutively lead to $\alpha_j=\alpha_j'$. Since $j$ was an arbitrary coordinate this will complete the proof for this case. So assume there is no $a_i$ s.t. $a_i=0$. Now choose $i$ such that $\alpha_i\neq 0$. Notice that such $i$ exist since otherwise $\alpha=0$ which is against the assumptions of the theorem. WLOG assume $i=1$. Now assume $\sigma(x)$ and $\Psi(x)$ (where the parameters are dropped) are defined as follows:
\begin{gather}
\Psi(x, \gamma,\lambda):=\gamma+(1-\gamma-\lambda)\sigma(x)\label{eq:psych_short}
\end{gather}
we will use these expressions below to prove the identifibaility. Notice that at the core of all four functions $\Psi(x;\alpha, \beta,\gamma,\lambda), \Psi(x;\alpha', \beta',\gamma',\lambda')$, $\sigma(x;a, b)$ and $\sigma(x;a', b')$ are $\sigma(x)$ and $\Psi(x)$. We will attempt to write the Fourier transform of log of these functions, in a more canonical form based on the Fourier transform of $\sigma(x)$ and $\Psi(x)$ to tackle the identifiability problem. Before doing so notice that the parameters $(a, b, \alpha, \beta)$ and respectively $(a'
, b', \alpha', \beta')$ are linearly related to the features. We will use this fact to rewrite this linear relationship for the application of Fourier transform in one dimension. To elaborate on this let's consider 
\[
\sigma(x;a,b)=(1+\exp(-(a^Tx + b)))^{-1}.
\]
this can be rewritten as
\[
\sigma(x;a,b)=\sigma(a^T x + b) = \sigma(a_{\bar{1}}^Tx_{\bar{1}} + a_1x_1 + b).
\]
where $v_{\bar{i}}\in\mathbb{R}^n$ is defined as the vector that is derived from vector $v\in\mathbb{R}^n$ by removing its $i$-th coordinate. A similar way of representation can be used for all the other three functions in (\ref{eq:log_identifiability}). We will use this representation below to apply one dimensional Fourier transform on both sides of (\ref{eq:log_identifiability}).

For a function $f(x)$, define the Fourier transform ($\mathcal{F}$(.)) of it, $\hat{f}$ as 
\[
\hat{f}(\nu):=\int_{-\infty}^{\infty}f_{\alpha, \beta}(x)e^{-2\pi i\nu x}dx
\]
Before applying the Fourier transform it is important to ensure the existence of such transformation. In fact terms in (\ref{eq:log_identifiability}) do not fall into such category as they are unbounded with unbounded support. For this reason we will take second derivative of (\ref{eq:log_identifiability}) w.r.t. $x_1$, which leads to:
{\tiny
\begin{align}
&\left[\frac{\alpha_1^2 \sigma_{\alpha,\beta}(1-\sigma_{\alpha,\beta})}{(\gamma+(1-\gamma-\lambda)\sigma_{\alpha,\beta})^2}\times\notag \right.
\left.\vphantom{\frac{1}{2}}\left((1-2\sigma_{\alpha,\beta})\Psi_{\alpha,\beta}-\sigma_{\alpha,\beta}(1-\sigma_{\alpha,\beta})\right)\right]\notag\\
- & a_1^2(\sigma_{a, b})(1-\sigma_{a,b})=
\left[\frac{{\alpha'_1}^2 \sigma_{{\alpha'},{\beta'}}(1-\sigma_{{\alpha'},{\beta'}})}{(\gamma_1'+(1-\gamma'-\lambda')\sigma_{{\alpha'},{\beta'}})^2} \right.\notag\\
\times &\left.\vphantom{\frac{1}{2}}\left((1-2\sigma_{{\alpha'},{\beta'}})\Psi_{{\alpha'},{\beta'}}-\sigma_{{\alpha'},{\beta'}}(1-\sigma_{{\alpha'},{\beta'}})\right)\right]-\\
&{a'}^2(\sigma_{a', b'})(1-\sigma_{a',b'})
\label{eq:second_derivative}
\end{align}}
where $\sigma_{\alpha,\beta}$ and $\Psi_{\alpha, \beta}$ are synonyms to $\sigma(x;\alpha,\beta)$ and $\Psi(x;\alpha, \beta, \gamma, \lambda)$ respectively and are used as short-hands. Define
{\tiny
\begin{flalign*}
    &f_{\gamma, \lambda}(\alpha^T x+\beta):=f(\alpha^T x+\beta):=f_{\alpha, \beta}(x)&\\
    :=&f_{\alpha, \beta,\gamma, \lambda}(x):=f_{\gamma, \lambda}(x)&\\
    :=&\left[\frac{\alpha_1^2 \sigma_{\alpha,\beta}(1-\sigma_{\alpha,\beta})}{(\gamma+(1-\gamma-\lambda)\sigma_{\alpha,\beta})^2}\notag \left((1-2\sigma_{\alpha,\beta})\Psi_{\alpha,\beta}-\sigma_{\alpha,\beta}(1-\sigma_{\alpha,\beta})\right)\right]
\end{flalign*}
}
and $g(a^T x+b):=g_{a, b}(x):=a_1^2(\sigma_{a, b})(1-\sigma_{a,b})$. 
Now notice that we have
\begin{align*}
\int_{-\infty}^{\infty}\left|f_{\alpha, \beta}(x)\right|dx &\leq 
\int_{-\infty}^{\infty}2\frac{\alpha_1^2 \sigma_{\alpha,\beta}(1-\sigma_{\alpha,\beta})}{(\gamma+(1-\gamma-\lambda)\sigma_{\alpha,\beta})^2}dx\\ &\leq \int_{-\infty}^{\infty}2\frac{\alpha_1^2 \sigma_{\alpha,\beta}(1-\sigma_{\alpha,\beta})}{\gamma^2}dx\\
&=2\frac{\alpha_1^2}{\gamma^2}\int_{-\infty}^{\infty} \sigma_{\alpha,\beta}(1-\sigma_{\alpha,\beta})dx\\
&=4\frac{\alpha_1^2}{|\alpha_1|\gamma^2}
\end{align*}
which means $f_{\alpha, \beta}(x)\in L_1$. Similarly
\[
\int_{-\infty}^{\infty}|g_{a, b}(x)|=\frac{a_1^2}{|a_1|}
\]
and therefore $g_{a, b}(x)\in L_1$. As such $f_{\alpha, \beta}(x)$ and $g_{a, b}(x)$ have Fourier transforms. Now we take the Fourier transform of both sides of (\ref{eq:second_derivative}) with respect to $x_1$ getting:
\begin{gather}
\mathcal{F}\left(f_{\alpha,\beta}(x)\right) + \mathcal{F}\left(g_{a, b}(x)\right) = \mathcal{F}\left(f_{\alpha',\beta'}(x)\right) + \mathcal{F}\left(g_{a', b'}(x)\right)\label{eq:left_fourier}
\end{gather}
Consider the first term on the l.h.s. of (\ref{eq:left_fourier}):
\begin{align*}
&\mathcal{F}\left(f_{\alpha, \beta}(x)\right) = \mathcal{F}\left(f(\alpha^T x+\beta)\right) \mathcal{F}\left(f(\alpha_{\bar{1}}^Tx_{\bar{1}} + \alpha_1x_1 + \beta)\right)\\
=&\int_{-\infty}^\infty e^{-2\pi i \nu x_1} \left(f(\alpha_{\bar{1}}^Tx_{\bar{1}} + \alpha_1x_1 + \beta)\right) dx_1\\
=&\int_{-\infty}^\infty e^{-2\pi i \frac{\nu}{\alpha_1} \alpha_1x_1} \left(f(\alpha_{\bar{1}}^Tx_{\bar{1}} + \alpha_1x_1 + \beta)\right) dx_1\\
=&\frac{e^{2\pi i \frac{\nu}{\alpha_1} (\alpha_{\bar{1}}^Tx_{\bar{1}}+\beta)}}{|\alpha_1|}\reallywidehat{f_{\gamma, \lambda}}\left(\frac{\nu}{\alpha_1}\right)
\end{align*}
Applying the similar modification to the other 3 terms in (\ref{eq:left_fourier}) we get:
{\tiny
\begin{gather*}
\frac{e^{2\pi i \frac{\nu}{\alpha_1} (\alpha_{\bar{1}}^Tx_{\bar{1}}+\beta)}}{|\alpha_1|}\reallywidehat{f_{\gamma, \lambda}}\left(\frac{\nu}{\alpha_1}\right) +
\frac{e^{2\pi i \frac{\nu}{a_1} (a_{\bar{1}}^Tx_{\bar{1}}+b)}}{|a_1|}\reallywidehat{g}\left(\frac{\nu}{a_1}\right) = \\
\frac{e^{2\pi i \frac{\nu}{\alpha'_1} ({\alpha_{\bar{1}}'}^Tx_{\bar{1}}+\beta')}}{|\alpha'_1|}\reallywidehat{f_{\gamma', \lambda'}}\left(\frac{\nu}{\alpha'_1}\right) + 
\frac{e^{2\pi i \frac{\nu}{a'_1} ({a_{\bar{1}}'}^Tx_{\bar{1}}+b')}}{|a'_1|}\reallywidehat{g}\left(\frac{\nu}{a'_1}\right)
\end{gather*}
}
Consider the following cases: 
\begin{enumerate*}[label=(\roman*)]
\item 
$\frac{\alpha_{\bar{1}}}{\alpha_1}\neq \frac{a_{\bar{1}}}{a_1}$     
\label{eq:ineq_1}
\item 
$\frac{\alpha_{\bar{1}}}{\alpha_1}\neq \frac{\alpha'_{\bar{1}}}{\alpha'_1}$
\label{eq:ineq_2}
\item 
$\frac{\alpha_{\bar{1}}}{\alpha_1}\neq \frac{a'_{\bar{1}}}{a'_1}$ 
\label{eq:ineq_3}
\end{enumerate*}. First we assume \ref{eq:ineq_1}, \ref{eq:ineq_2}, and \ref{eq:ineq_3} hold. Now consider the following hyperplanes: 

\begin{gather*}
H_{a}=\{x\in\mathbb{R}^n|\left(\frac{\alpha_{\bar{1}}}{\alpha_1}-\frac{a_{\bar{1}}^T}{a_1}\right)^Tx+\frac{\beta}{\alpha_1}-\frac{b}{a_1}= 0\}\\
H_{\alpha'}=\{x\in\mathbb{R}^n|\left(\frac{\alpha_{\bar{1}}}{\alpha_1}-\frac{\alpha'_{\bar{1}}}{\alpha'_1}\right)^Tx+\frac{\beta}{\alpha_1}-\frac{\beta'}{\alpha'_1}=0\}\\
H_{a'}=\{x\in\mathbb{R}^n|\left(\frac{\alpha_{\bar{1}}}{\alpha_1}-\frac{a'_{\bar{1}}}{a'_1}\right)^Tx+\frac{\beta}{\alpha_1}-\frac{b'}{a'_1}=0\}
\end{gather*}
Now notice that $H:=H_a\cup H_\alpha' \cup H_{a'}$ is a closed set, therefore $\mathbb{R}^n\setminus H$ is open. Take an arbitrary open disk $V$ in $\mathbb{R}^n\setminus H$. Based on the definition of $V$  it follows that for any $w\in V$
\begin{gather*}
\left(\frac{\alpha_{\bar{1}}}{\alpha_1}-\frac{a_{\bar{1}}}{a_1}\right)^Tw+\frac{\beta}{\alpha_1}-\frac{b}{a_1}\neq 0\\
\left(\frac{\alpha_{\bar{1}}}{\alpha_1}-\frac{\alpha'_{\bar{1}}}{\alpha'_1}\right)^Tw+\frac{\beta}{\alpha_1}-\frac{\beta'}{\alpha'_1}\neq 0\\
\left(\frac{\alpha_{\bar{1}}}{\alpha_1}-\frac{a'_{\bar{1}}}{a'_1}\right)^Tw+\frac{\beta}{\alpha_1}-\frac{b'}{a'_1}\neq 0
\end{gather*}
Take such a $w$. Then for any $s\in\mathbb{R}$ except the solution of $\left(s\left(\frac{\alpha_{\bar{1}}}{\alpha_1}-\frac{a_{\bar{1}}}{a_1}\right)^Tw + \frac{\beta}{\alpha_1} - \frac{b}{a_1}\right)=0$, we have $\left(s\left(\frac{\alpha_{\bar{1}}}{\alpha_1}-\frac{a_{\bar{1}}}{a_1}\right)^Tw + \frac{\beta}{\alpha_1} - \frac{b}{a_1}\right)\neq 0$. Dividing both sides with $\exp\left(2\pi i\nu\left(\frac{\alpha_{\bar{1}}}{\alpha_1}^Tw + \frac{\beta}{\alpha_1}\right)\right)$ we get:
% \begin{gather}
% \frac{e^{2\pi i \frac{\nu}{\alpha_1} s\alpha_{\bar{1}}^Tw}}{|\alpha_1|}\reallywidehat{f_{\gamma,\lambda}}\left(\frac{\nu+\beta}{\alpha_1}\right) +
% \frac{e^{2\pi i \frac{\nu}{a_1} (sa_{\bar{1}}^Tw)}}{|a_1|}\reallywidehat{g}\left(\frac{\nu+b}{a_1}\right) = \\
% \frac{e^{2\pi i \frac{\nu}{\alpha'_1} (s{\alpha_{\bar{1}}'}^Tw)}}{|\alpha'_1|}\reallywidehat{f_{\gamma',\lambda'}}\left(\frac{\nu+\beta'}{\alpha'_1}\right) + 
% \frac{e^{2\pi i \frac{\nu}{a'_1} (s{a_{\bar{1}}'}^Tw)}}{|a'_1|}\reallywidehat{g}\left(\frac{\nu+b'}{a'_1}\right)
% \end{gather}
% With further modification we have
\begin{gather*}
\frac{\reallywidehat{f_{\gamma,\lambda}}\left(\frac{\nu}{\alpha_1}\right)}{|\alpha_1|} + 
\frac{e^{2\pi i \nu\left(s(\frac{a_{\bar{1}}}{a_1}-\frac{\alpha_{\bar{1}}}{\alpha_1})^Tw+\frac{\beta}{\alpha_1}-\frac{b}{a_1}\right)}}{|a_1|}\reallywidehat{g}\left(\frac{\nu}{a_1}\right) = \notag\\
\frac{e^{2\pi i \nu\left(s(\frac{\alpha'_{\bar{1}}}{\alpha'_1}-\frac{\alpha_{\bar{1}}}{\alpha_1})^Tw+\frac{\beta}{\alpha_1}-\frac{\beta'}{\alpha'_1}\right)}}{|\alpha'_1|}\reallywidehat{f{\gamma',\lambda'}}\left(\frac{\nu}{\alpha'_1}\right) + \notag\\
\frac{e^{2\pi i \nu\left(s(\frac{a'_{\bar{1}}}{a'_1}-\frac{\alpha_{\bar{1}}}{\alpha_1})^Tw+\frac{\beta}{\alpha_1}-\frac{b'}{a'_1}\right)}}{|a'_1|}\reallywidehat{g}\left(\frac{\nu}{a'_1}\right)
\label{eq:division_by_exp}
\end{gather*}
Applying Lemma \ref{lem:fourier_average}, i.e. dividing both sides with $2T$ and integrating with $\int_{-T}^{T}(.)ds$, and finally taking the limit $T\to\infty$ we get
\[
\frac{\reallywidehat{f_{\gamma,\lambda}}\left(\frac{\nu}{\alpha_1}\right)}{|\alpha_1|}=0
\]
for all $\nu$'s, which is obviously impossible. As such it follows that 
\begin{gather*}
\left(\frac{\alpha_{\bar{1}}}{\alpha_1}-\frac{\alpha'_{\bar{1}}}{\alpha'_1}\right)^Tw+\frac{\beta}{\alpha_1}-\frac{\beta'}{\alpha'_1}=0\\
{\rm or}\\
\left(\frac{\alpha_{\bar{1}}}{\alpha_1}-\frac{a'_{\bar{1}}}{a'_1}\right)^Tw+\frac{\beta}{\alpha_1}-\frac{b'}{a'_1}=0
\end{gather*}
which is not an exclusive or. Note that since this holds for any $w\in V$ using Lemma \ref{lem:open_mapping} it follows that 
$\alpha_1'\alpha_{\bar{1}}=\alpha_1\alpha'_{\bar{1}}$ or $a'_1\alpha_{\bar{1}}=\alpha_1a'_{\bar{1}}$ which are equivalent to $\alpha=\frac{\alpha_1}{\alpha_1'} \alpha'$ or $\alpha=\frac{\alpha_1}{a_1'} a'$ respectively. This is in contradiction with either of \ref{eq:ineq_1}, \ref{eq:ineq_2}, or \ref{eq:ineq_3}.   Notice that we cannot have the negation of \ref{eq:ineq_3} or \ref{eq:ineq_1} exclusively because of Lemma \ref{lem:aa_prime}. So let's assume \ref{eq:ineq_1} and \ref{eq:ineq_3} but the negation of \ref{eq:ineq_2} holds. Applying Lemma \ref{lem:fourier_average}, i.e. dividing both sides with $2T$ and integrating with $\int_{-T}^{T}(.)ds$, and finally taking the limit $T\to\infty$ we get
\[
\frac{\reallywidehat{f_{\gamma,\lambda}}\left(\frac{\nu}{\alpha_1}\right)}{|\alpha_1|} + \frac{\reallywidehat{g}\left(\frac{\nu}{a_1}\right)}{|a_1|} = 
\frac{\reallywidehat{g}\left(\frac{\nu}{a'_1}\right)}{|a'_1|}
\]
but notice that due to Lemma \ref{lem:aa_prime} $a_1=a_1'$ and such we will have 
\[
\frac{\reallywidehat{f_{\gamma,\lambda}}\left(\frac{\nu}{\alpha_1}\right)}{|\alpha_1|}=0
\]
which again is impossible. Also if neither of \ref{eq:ineq_1}, \ref{eq:ineq_2}, or \ref{eq:ineq_3} hold then using Lemmas \ref{lem:aa_prime} and \ref{lem:open_mapping} we will conclude that $a=a'=\alpha=\alpha'$ and $b=b'=\beta=\beta'$ which completes the proof. Therefore the only possible remaining case is where \ref{eq:ineq_2} does not hold but \ref{eq:ineq_1} and \ref{eq:ineq_3} do hold. Applying Lemma \ref{lem:fourier_average}, i.e. dividing both sides with $2T$ and integrating with $\int_{-T}^{T}(.)ds$, and finally taking the limit $T\to\infty$ we get
\[
\left(\frac{\alpha_{\bar{1}}}{\alpha_1}-\frac{\alpha'_{\bar{1}}}{\alpha'_1}\right)^Tw+\frac{\beta}{\alpha_1}-\frac{\beta'}{\alpha'_1}=0
\]
which using Lemma \ref{lem:open_mapping} implies
$\alpha=\frac{\alpha_1}{\alpha'_1}\alpha'$ and $\beta=\frac{\alpha_1}{\alpha_1'}\beta'$. Additionally we will have
\begin{gather*}
\frac{e^{2\pi i \nu\left(s(\frac{a_{\bar{1}}}{a_1}-\frac{\alpha_{\bar{1}}}{\alpha_1})^Tw+\frac{\beta}{\alpha_1}-\frac{b}{a_1}\right)}}{|a_1|}\reallywidehat{g}\left(\frac{\nu}{a_1}\right) = \notag\\
\frac{e^{2\pi i \nu\left(s(\frac{a'_{\bar{1}}}{a'_1}-\frac{\alpha_{\bar{1}}}{\alpha_1})^Tw+\frac{\beta}{\alpha_1}-\frac{b'}{a'_1}\right)}}{|a'_1|}\reallywidehat{g}\left(\frac{\nu}{a'_1}\right)
\end{gather*}
but $a_1=a_1'$ due to Lemma \ref{lem:aa_prime}. This will imply
\[
e^{2\pi i \nu\left(s(\frac{a_{\bar{1}}}{a_1}-\frac{\alpha_{\bar{1}}}{\alpha_1})^Tw+\frac{\beta}{\alpha_1}-\frac{b}{a_1}\right)}=e^{2\pi i \nu\left(s(\frac{a_{\bar{1}}}{a'_1}-\frac{\alpha_{\bar{1}}}{\alpha_1})^Tw+\frac{\beta}{\alpha_1}-\frac{b'}{a'_1}\right)}
\]
and as such, using Lemmas \ref{lem:open_mapping} and \ref{lem:aa_prime} will lead to $b=b'$. But this would imply 
\[
\Psi(x;\alpha,\beta, \gamma,\lambda)=\Psi(x;\alpha',\beta', \gamma',\lambda')
\]
Now consider things for $x_1$. Assume $\alpha_1\alpha_1'<0$. If $\alpha_1<0$ we take the limit $x\to\infty$ will lead to  $\gamma=1-\lambda'$ and $x\to-\infty$ will lead to $\gamma'=1-\lambda$. But note that $\gamma+\lambda < 1$ and $\gamma'+\lambda' < 1$ based on our assumptions and this will lead to a contradiction.  Similar is true for $\alpha_1>0$. So $\alpha_1\alpha_1'>0$. This time taking the same limits implies $\gamma=\gamma'$ and $\lambda=\lambda'$, which will also lead to $\alpha_1=\alpha_1'$ and $\beta=\beta'$. But then $\alpha=\frac{\alpha_1}{\alpha_1'}\alpha'$ implies $\alpha=\alpha'$. This completes the proof.
}
\end{proof}
\subsection{Other results about real-world experiments}
As noted in subsection \ref{subsec:real_world} beside classification accuracy as a measure of success of our introduced models, we had consistent results across other classification metrics such as f1, ROCAUC, and Brier score. Below we provide these results in Tables \ref{tab:f1}, \ref{tab:AUC}, and \ref{tab:Brier}:

\begin{table}[H]
\centering
\scalebox{0.8}{  \begin{tabular}{|p{1cm}|p{1.2cm}|p{1.2cm}|p{1.2cm}|p{1.2cm}|p{1.2cm}|l}
    \toprule
    &  fsa & mta & sph & hth & mba\\
    \midrule
    SPM & $\textit{0.89479}$& ${\bf 0.91477}$ &$\textit{0.92218}$& ${\bf 0.92758}$ & $\textit{0.93678}$\\
    \midrule
    PsychM & ${\bf 0.89584}$ & $\textit{0.91418}$ & ${\bf 0.92377}$&  $\textit{0.92644}$ & ${\bf 0.93729}$\\
    \midrule
    Na\"ive  & $0.88341$ & $0.90391$ & $0.91356$ &  $0.91942$ & $0.93038$ \\
    \midrule
    Elkan & $0.88440$ & $0.90373$ &  $0.91308$ & $0.91759$ & $0.92901$  \\
    \midrule
    Real & $0.95993$ & $0.95919$& $0.95962$& $0.96012$ & $0.95991$\\
\bottomrule
\end{tabular}
}
 \caption{f1 score for different models across different datasets, in bootstraps over 200 trials.}
\label{tab:f1}   
\end{table}
\begin{table}[H]
\centering
\scalebox{0.8}{  \begin{tabular}{|p{1cm}|p{1.2cm}|p{1.2cm}|p{1.2cm}|p{1.2cm}|p{1.2cm}|l}
    \toprule
    &  fsa & mta & sph & hth & mba\\
    \midrule
    SPM & $\textit{0.97333}$& ${\bf 0.97891}$ &$\textit{0.98068}$& $\textit{0.98161}$ & $\textit{0.98342}$\\
    \midrule
    PsychM & ${\bf 0.97643}$ & $\textit{0.98076}$ & ${\bf 0.98257}$&  $\textbf{0.98283}$ & ${\bf 0.98495}$\\
    \midrule
    Na\"ive  & $0.97480$ & $0.97901$ & $0.98098$ &  $0.98147$ & $0.98367$ \\
    \midrule
    Elkan & $0.97376$ & $0.97786$ &  $0.97947$ & $0.98015$ & $0.98235$  \\
    \midrule
    Real & $0.99286$ & $0.99317$& $0.99317$& $0.99327$ & $0.99326$\\
\bottomrule
\end{tabular}
}
 \caption{AUC scores for different models across different datasets, in bootstraps over 200 trials.}
\label{tab:AUC}   
\end{table}
\begin{table}[H]
\centering
\scalebox{0.8}{  \begin{tabular}{|p{1cm}|p{1.2cm}|p{1.2cm}|p{1.2cm}|p{1.2cm}|p{1.2cm}|l}
    \toprule
    &  fsa & mta & sph & hth & mba\\
    \midrule
    SPM & $\textit{0.07813}$& $\textit{0.06450}$ &$\textit{0.05940}$& $\textbf{0.05556}$ & $\textit{0.04981}$\\
    \midrule
    PsychM & ${\bf 0.07535}$ & $\textbf{0.06384}$ & ${\bf 0.05797}$&  $\textit{0.05615}$ & ${\bf 0.04873}$\\
    \midrule
    Na\"ive  & $0.08851$ & $0.07363$ & $0.06633$ &  $0.06305$ & $0.05453$ \\
    \midrule
    Elkan & $0.08744$ & $0.07351$ &  $0.06634$ & $0.06421$ & $0.05551$  \\
    \midrule
    Real & $0.03055$ & $0.03054$& $0.03077$& $0.03030$ & $0.03040$\\
\bottomrule
\end{tabular}
}
 \caption{Brier scores for different models across different datasets, in bootstraps over 200 trials. The lower is better.}
\label{tab:Brier}   
\end{table}

You can see that SPM and PsychM consistently outperform all the other LePU methods on all the four scores that we considered here (together with the results reported in Table \ref{tab:real_world}).

% Note: in this sample, the section number is hard-coded in. Following
% proper LaTeX conventions, it should properly be coded as a reference:

%In this appendix we prove the following theorem from
%Section~\ref{sec:textree-generalization}:

In this appendix we prove the following theorem from
Section~6.2:

\vskip 0.2in
\bibliography{main.bib}

\end{document}